\definecolor{yellow}{HTML}{fefea2}
\definecolor{blue}{HTML}{00cdfa}
\definecolor{orange}{HTML}{ff9830}
\definecolor{green}{HTML}{5dbb63}
\tikzstyle{theorem} = [rectangle, rounded corners, minimum width=3cm, minimum height=1cm, text centered, draw=black, fill=gray!10]
\tikzstyle{process} = [rectangle, minimum width=3cm, minimum height=1cm, text centered, draw=black, fill=blue!20]
\tikzstyle{arrow} = [thick,->,>=stealth]
\newenvironment{theoremcopy}[1]{\innercustomthm}{\endinnercustomthm}
\newenvironment{lemmacopy}[1]{\innercustomlemma}{\endinnercustomlemma}
\newenvironment{corollarycopy}[1]{\innercustomcorollary}{\endinnercustomcorollary}
\newcommand{\LL}{\text{L}}
\newcommand{\srg}{\textsf{srg}}
\title[Theoretical Insights into Line Graph Transformation on Graph Learning]{Theoretical Insights into Line Graph Transformation on Graph Learning}
  \author{\Name{Fan Yang\nametag{\thanks{Corresponding author: fan.yang@cs.ox.ac.uk}}} \Email{fan.yang@cs.ox.ac.uk} \\
  \Name{Xingyue Huang} \Email{xingyue.huang@cs.ox.ac.uk}\\
  \addr Department of Computer Science\\ University of Oxford, Oxford, UK.}
\begin{document}

\maketitle

\begin{abstract}
Line graph transformation has been widely studied in graph theory, where each node in a line graph corresponds to an edge in the original graph. This has inspired a series of graph neural networks (GNNs) applied to transformed line graphs, which have proven effective in various graph representation learning tasks. However, there is limited theoretical study on how line graph transformation affects the expressivity of GNN models. In this study, we focus on two types of graphs known to be challenging to the Weisfeiler-Leman (WL) tests: Cai-Fürer-Immerman (CFI) graphs and strongly regular graphs, and show that applying line graph transformation helps exclude these challenging graph properties, thus potentially assist WL tests in distinguishing these graphs.
 We empirically validate our findings by conducting a series of experiments that compare the accuracy and efficiency of graph isomorphism tests and GNNs on both line-transformed and original graphs across these graph structure types.
\end{abstract}
\begin{keywords}
Graph isomorphism testing, Strongly regular graphs, CFI graphs, Expressivity, Graph neural networks, Line graphs.
\end{keywords}

\section{Introduction}
\label{sec:intro}

Graph Neural Networks (GNNs) are prominent models widely studied in the domain of graph representation learning, with various applications ranging from social network analysis~\citep{fan2019graph} and recommendation systems~\citep{Baskin2016A} to drug discovery~\citep{Muzio2020Biological}. 
In particular, advancements in graph theory have played a significant role in the evolution of GNNs. The integration of graph measures, such as graph spectrum~\citep{baldesi2018spectral}, centrality~\citep{maurya2021graph}, and modularity~\citep{tsitsulin2023graph}, has shaped the design, analysis, and theoretical foundations of GNN research.

Line graph transformation is of particular interest due to its successful applications on GNNs, proving effective in several tasks such as link predictions~\citep{cai2020line,liu2021indigo}, community detection~\citep{chen2020supervised}, and material discovery~\citep{choudhary2021atomistic,nlgn_material}. 
As illustrated in Figure \ref{fig:line_illu}, the \emph{line graph transformation} is a graph rewriting algorithm that converts a graph’s edges into nodes. Transformed nodes are then connected if their corresponding edges in the original graph share a common node.
This transformation effectively captures the relationships between edges in 
a graph, and is particularly useful for tasks where edge-centric information is critical.

\begin{figure}[t]
\centering
\begin{tikzpicture}
    % First Graph
    \begin{scope}
        \node[draw, circle, fill = green] (a1) at (0,0) {$a$};
        \node[draw, circle, fill=blue] (b1) at (2,0) {$b$};
        \node[draw, circle, fill=yellow] (c1) at (1,2) {$c$};
        \node[draw, circle, fill=orange] (d1) at (3,2) {$d$};
        
        \draw (a1) -- (b1) ;
        \draw (a1) -- (c1) ;
        \draw (b1) -- (c1) ;
        \draw (b1) -- (d1) ;

        \node[draw=none,fill=none] (text) at (1.5,-1) {$G$};
    \end{scope}
    
    % Arrow from first to second graph
    \draw[->, thick] (4,1) -- (5,1);
    
    % Second Graph
    \begin{scope}[xshift=6cm]
        \node[draw, circle, fill=green] (a2) at (0,0) {$a$};
        \node[draw, circle,fill=blue] (b2) at (2,0) {$b$};
        \node[draw, circle, fill = yellow] (c2) at (1,2) {$c$};
        \node[draw, circle,fill=orange] (d2) at (3,2) {$d$};
        
        \draw (a2) -- (b2) node[midway, below] {$ab$};
        \draw (a2) -- (c2) node[midway, left] {$ac$};
        \draw (b2) -- (c2) node[midway, right, yshift=0.2cm] {$bc$};
        \draw (b2) -- (d2) node[midway, right] {$bd$};
        
        \draw[dashed] ($(a2)!0.5!(b2)$) to($(a2)!0.5!(c2)$);
        \draw[dashed] ($(a2)!0.5!(b2)$) to ($(b2)!0.5!(c2)$);
        \draw[dashed] ($(a2)!0.5!(c2)$) to ($(b2)!0.5!(c2)$);
        \draw[dashed] ($(b2)!0.5!(c2)$) to ($(b2)!0.5!(d2)$);
        \draw[dashed] ($(b2)!0.5!(a2)$) to ($(b2)!0.5!(d2)$);
    \end{scope}
    
    % Arrow from second to third graph
    \draw[->, thick] (10,1) -- (11,1);
    
    % Third Graph
    \begin{scope}[xshift=12cm]
        \node[draw, circle,scale = 0.8, path picture={
        \path[fill=yellow] (path picture bounding box.south) arc[start angle=-90, end angle=90, radius=.5] -- cycle;
        \path[fill=green] (path picture bounding box.north) arc[start angle=90, end angle=270, radius=.5] -- cycle;
    }] (ac3) at (0,1) {$ac$};
        \node[draw, circle,,scale = 0.8, path picture={
        \path[fill=blue] (path picture bounding box.south) arc[start angle=-90, end angle=90, radius=.5] -- cycle;
        \path[fill=green] (path picture bounding box.north) arc[start angle=90, end angle=270, radius=.5] -- cycle;
    }] (ab3) at (0.5,0) {$ab$};
        \node[draw, circle,,scale = 0.8,path picture={
        \path[fill=yellow] (path picture bounding box.south) arc[start angle=-90, end angle=90, radius=.5] -- cycle;
        \path[fill=blue] (path picture bounding box.north) arc[start angle=90, end angle=270, radius=.5] -- cycle;
    }] (bc3) at (1,1) {$bc$};
        \node[draw, circle,,scale = 0.8, path picture={
        \path[fill=cyan] (path picture bounding box.south) arc[start angle=-90, end angle=90, radius=.5] -- cycle;
        \path[fill=orange] (path picture bounding box.north) arc[start angle=90, end angle=270, radius=.5] -- cycle;
    }] (bd3) at (2,1) {$bd$};
        
        \draw (ac3) -- (bc3);
        \draw (ac3) -- (ab3);
        \draw (bc3) -- (ab3);
        \draw (bc3) -- (bd3);
        \draw (ab3) -- (bd3);

        \node[draw=none,fill=none] (text) at (1,-1) {$L(G)$};
    \end{scope}

\end{tikzpicture}
\vspace{-1em}
\caption{An example of converting a graph $G$ to its line graph $L(G)$.}
\label{fig:line_illu}
\end{figure}
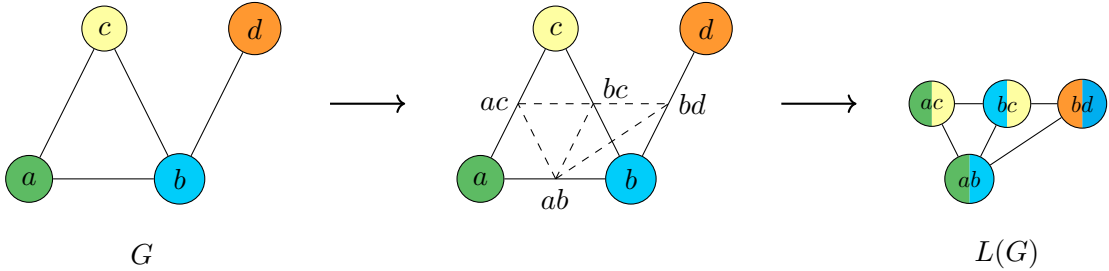

While the alignment of the Weisfeiler-Leman (WL) test to Message Passing Neural Networks (MPNNs)~\citep{morris2019weisfeiler,xu2018powerful} and higher-order GNNs~\citep{maron2020provablypowerfulgraphnetworks} on input graph has provided valuable insights into measuring the expressive power of these architectures, there has been limited exploration on the expressive power of these models applying on line graphs. Specifically, we are interested in answering the following questions:
\begin{enumerate}
    \item \emph{What are the connections between learning permutation-invariant functions on the line graph and the original graph?}
    \item \emph{What classes of graphs can line graph transformation help distinguish?}
\end{enumerate}

In this study, we aim to understand the effects of line graph transformations on the data from a graph theory perspective. We first observe that conducting a graph isomorphism test between a pair of connected graphs is equivalent to conducting it between their respective line graphs with the exception of the pair of graphs isomorphic to $K_{1,3}$ and $C_3$.  Moreover, this holds for repeated applications of line graph transformation. These observations inspire us to demonstrate the equivalence of the graph isomorphism test and uniform function approximation on graphs after an arbitrary number of transformations $L^{(n)}(G)$ and the original graph $G$. 

Furthermore, we shift our focus to Cai-Fürer-Immerman (CFI) graphs and strongly regular graphs, which are notably difficult for the WL test. We demonstrate that with several applications of line graph transformation, the resulting line graph no longer exhibits the same property, potentially simplifying the process of differentiation through standard isomorphism tests or GNNs.
We then perform a series of experiments to evaluate how line graph transformations affect the performance of standard isomorphism tests and GNNs in distinguishing difficult graph classes, focusing on the effectiveness of applying lower-order WL tests on the transformed line graphs for isomorphism checks.

Our contribution can be summarized as below:
\begin{enumerate}
    \item We show that the universal approximation of permutation-invariant functions on connected graphs after arbitrary numbers of line graph transformation is equivalent to their original graph, assuming the root graph is not isomorphic to $K_{1,3}$. 
    \item We study two challenging graph types for isomorphism tests: CFI graphs and strongly regular graphs, and show that they are excluded in line graphs with few exceptions. 
    \item We empirically validate our theory by conducting a series of graph isomorphism tests and GNNs and present line graph transformation's effect on distinguishing challenging graph pairs.
\end{enumerate}

All proof details are presented in Appendix \ref{apd:proof}.

\section{Related works}

\paragraph{GNNs and WL test}

The comparison of the expressive power of GNNs with those of the WL tests is a well-studied topic in the literature. \citet{xu2018powerful} and \citet{morris2019weisfeiler} demonstrated that Message-Passing Neural Networks (MPNNs) are capable of matching the expressiveness of the 1-WL test. Furthermore, \citet{morris2019weisfeiler} extended this analysis by developing $k$-dimensional GNNs ($k$-GNNs) to align with the expressiveness of the $k$-WL tests, while inheriting their limitations on computational complexity. This led to theoretical analysis of several other existing classes of higher-order GNNs to study their expressive power. For instance, \citet{frasca2022understanding} has drawn connections from subgraph GNNs to Invariant Graph Networks (IGNs)~\citep{Maron2018InvariantAE}, showing that the expressive power of subgraph GNNs is inherently limited by 3-WL.
\citet{maron2020provablypowerfulgraphnetworks} also derived architectures that closely match the expressive power of 3-WL.

\paragraph{Challenging graphs} Another line of works~\citep{babai1979kwl, bouritsas2020srg3wl, cai1992optimal} focuses on understanding what classes of graphs are inherently challenging to the hierarchy of WL tests, shown in Figure \ref{fig:wl_relationship}.
One of the well-known classes is the Cai-Fürer-Immerman (CFI) graphs \citep{cai1992optimal}, designed specifically to be indistinguishable from the standard WL algorithm. CFI graphs are built by modifying the edges of a base graph with additional parity information encoded into gadgets, such that there exists a CFI graph that $k$-WL indistinguishable from any fixed $k$.

Another class of counter-examples to the WL test comes from the class of regular graphs, often characterized by their highly symmetrical structure and uniform node degrees, thus resulting in the same color refinement by 1-WL algorithms. 
In particular, strongly regular graphs include properties such as consistent numbers of common neighbors for adjacent and non-adjacent node pairs and are known to be 3-WL indistinguishable \citep{arvind2018subgraphgnn}, thus presenting challenges even with higher-order GNNs.

\begin{figure}
    \centering
\includegraphics[width=0.9\textwidth]{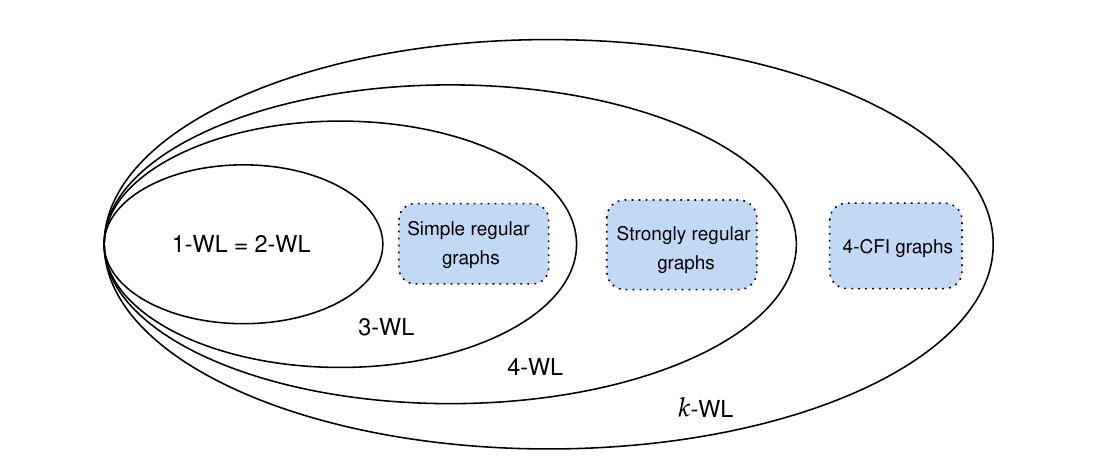}
    \caption{Relationships between WL tests and challenging graph types.}
    \label{fig:wl_relationship}
\end{figure}

\paragraph{Line Graph Neural Networks}

While there does not yet exist a comprehensive theoretical justification for the use of line graphs with GNNs, there have been some studies showing promising experimental results with line graph neural networks.
\citet{cai2020line} show that line graphs effectively convert link prediction problems into node classification problems, which avoids the information loss caused by necessary graph pooling and leads to competitive performance in various link prediction benchmarks. In materials science, line graphs have been used for explicit modeling of chemical bond angles, which has proven useful in atomic property predictions ~\citep{choudhary2021atomistic}. Moreover, \citet{chen2020supervised} demonstrated that line graph neural networks can construct the non-backtracking operator, enhancing the community detection capability in sparse graphs.

\section{Line graphs and their properties}
\paragraph{Notations}
We write ${G} = (V, E)$ to represent a finite, undirected, simple graph where $V$ is the set of nodes and $E$ is the set of edges. We denote $V(G)$ and $E(G)$ to specify the graph to which the set of nodes and the set of edges belong.
Given a node $u \in V(G)$, $d_G(u)$ denotes the degree of $u$ on graph $G$. 
We write $G\cong H $ if a graph $G$ is isomorphic to $H$ and $G\ncong H$  otherwise.
We also adopt the common notations for special graphs, e.g., $P_n,$ $C_n,$ and $K_n$ represent a path, a cycle, and a complete graph with $n$ nodes, respectively. We also write $K_{1,n}$ to represent star graphs with one center node and $n$ leaves. 
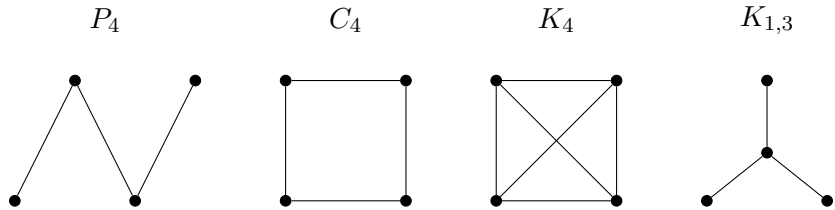
\begin{figure}[h]
\centering
\begin{tikzpicture}[scale = 0.8]
    % First Graph
    \begin{scope}
        \node[draw, circle, fill=black, minimum size=4pt, inner sep=0pt] (a1) at (0,0) {};
        \node[draw, circle, fill=black, minimum size=4pt, inner sep=0pt] (b1) at (2,0) {};
        \node[draw, circle, fill=black, minimum size=4pt, inner sep=0pt] (c1) at (1,2) {};
        \node[draw, circle, fill=black, minimum size=4pt, inner sep=0pt] (d1) at (3,2) {};
        
        \draw (a1) -- (c1) node[midway, left] {};
        \draw (b1) -- (c1) node[midway, right] {};
        \draw (d1) -- (b1) node[midway, right] {};
        \node[draw=none, fill=none] (text) at (1.5,3) {$P_{4}$};
    \end{scope}
    
    % Arrow from first to second graph
    \begin{scope}[xshift=4.5cm]
        \node[draw, circle, fill=black, minimum size=4pt, inner sep=0pt] (a1) at (0,0) {};
        \node[draw, circle, fill=black, minimum size=4pt, inner sep=0pt] (b1) at (2,0) {};
        \node[draw, circle, fill=black, minimum size=4pt, inner sep=0pt] (c1) at (2,2) {};

        \node[draw, circle, fill=black, minimum size=4pt, inner sep=0pt] (d1) at (0,2) {};
        
        \draw (a1) -- (b1) node[midway, left] {};
        \draw (b1) -- (c1) node[midway, right] {};
        \draw (c1) -- (d1) node[midway, right] {};
        \draw (d1) -- (a1) node[midway, right] {};
        \node[draw=none, fill=none] (text) at (1,3) {$C_{4}$};
    \end{scope}
    
    \begin{scope}[xshift=8cm]
        \node[draw, circle, fill=black, minimum size=4pt, inner sep=0pt] (a1) at (0,0) {};
        \node[draw, circle, fill=black, minimum size=4pt, inner sep=0pt] (b1) at (2,0) {};
        \node[draw, circle, fill=black, minimum size=4pt, inner sep=0pt] (c1) at (2,2) {};

        \node[draw, circle, fill=black, minimum size=4pt, inner sep=0pt] (d1) at (0,2) {};
        
        \draw (a1) -- (b1) node[midway, left] {};
        \draw (b1) -- (c1) node[midway, right] {};
        \draw (c1) -- (d1) node[midway, right] {};
        \draw (d1) -- (a1) node[midway, right] {};
        \draw (c1) -- (a1) node[midway, right] {};
        \draw (b1) -- (d1) node[midway, right] {};
         \node[draw=none, fill=none] (text) at (1,3) {$K_{4}$};
    \end{scope}
    
    % Third Graph
    \begin{scope}[xshift=11.5cm]
        \node[draw, circle, fill=black, minimum size=4pt, inner sep=0pt] (a1) at (0,0) {};
        \node[draw, circle, fill=black, minimum size=4pt, inner sep=0pt] (b1) at (2,0) {};

        \node[draw, circle, fill=black, minimum size=4pt, inner sep=0pt] (d1) at (1,2) {};
        \node[draw, circle, fill=black, minimum size=4pt, inner sep=0pt] (e1) at (1,0.8) {};
        
        \draw (e1) -- (a1) node[midway, left] {};
        \draw (e1) -- (d1) node[midway, right] {};
        \draw (e1) -- (b1) node[midway, right] {};

         \node[draw=none, fill=none] (text) at (1,3) {$K_{1,3}$};
    \end{scope}

\end{tikzpicture}
\caption{Examples of graph structures.
}
\label{fig:graph_examples}
\end{figure}

\paragraph{Line graph} 

A line graph $\LL(G)$ is derived from another graph $G$ termed the \emph{root graph}. Formally, given a root graph \(G = (V, E)\), where \(V\) is the set of nodes and \(E\) is the set of edges, the \emph{line graph}, denoted as L\((G)\), is constructed such that each node in L\((G)\) corresponds to an edge in \(E\), and two nodes in L\((G)\) are adjacent if their corresponding edges in \(G\) are adjacent with the same node in \(V\). For notation convenience, we use $\LL^{(n)}(G)$ to represent the resulting graph after $n$ times line graph transformation (e.g. $\LL^{(2)}(G) \coloneqq \LL(\LL(G))).$
The definition of line graphs induces some corollary properties. For instance, let $G$ be a simple undirected graph, it follows naturally that $|E(\text{L}(G))| = |V(G)|$. In addition, Lemma \ref{degree_line_graph} introduces a useful formula when analyzing the degree of new graphs.

\begin{lemma}\label{degree_line_graph}Let $u, v\in V(G)$ such that they are adjacent by an edge $e\in E(G).$ The edge $e$'s corresponding node representation $w_e\in V(\text{L}(G))$ follows \(
    d_{\LL(G)}(w_e) = d_G(u)+d_G(v)-2.\)
\end{lemma}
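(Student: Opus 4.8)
The plan is to count the neighbours of $w_e$ in $\LL(G)$ directly from the definition of adjacency in a line graph. By construction, a node $w_{e'} \in V(\LL(G))$ is adjacent to $w_e$ precisely when the edges $e$ and $e'$ of $G$ are distinct and share a common endpoint; since $e = \{u,v\}$, that shared endpoint must be $u$ or $v$. So I would partition the neighbourhood of $w_e$ according to which of $u$, $v$ witnesses the adjacency, count each part, and add.

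First I would count the edges of $G$ incident to $u$ other than $e$ itself: there are exactly $d_G(u) - 1$ of them, and each such $e'$ gives rise to exactly one neighbour $w_{e'}$ of $w_e$. Symmetrically, there are $d_G(v) - 1$ edges incident to $v$ other than $e$, each contributing a neighbour. The key step — the only place an actual argument is needed rather than bookkeeping — is to verify that these two families of edges are disjoint, so no neighbour is counted twice. Suppose some $e' \neq e$ were incident to both $u$ and $v$; then $e'$ has endpoints $u$ and $v$, i.e.\ $e' = \{u,v\} = e$, contradicting $e' \neq e$. This uses that $G$ is simple (no parallel edges), which is part of the standing assumption; it also matters that $\LL(G)$ is simple, so the distinct edges $e'$ really yield distinct nodes $w_{e'}$.

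Combining, $d_{\LL(G)}(w_e) = (d_G(u) - 1) + (d_G(v) - 1) = d_G(u) + d_G(v) - 2$, as claimed. I do not expect any genuine obstacle here; the only subtlety to flag explicitly is the disjointness of the two edge families (and excluding $e$ itself from each count), which is exactly where simplicity of $G$ is invoked.
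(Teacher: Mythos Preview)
Your proposal is correct and follows essentially the same counting argument as the paper: partition the edges adjacent to $e$ by whether they meet $e$ at $u$ or at $v$, giving $(d_G(u)-1)+(d_G(v)-1)$. You are in fact more careful than the paper in explicitly justifying disjointness of the two families via the simplicity of $G$, which the paper's proof leaves implicit.
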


Some noticeable line graph transformations include $\LL(P_n) = P_{n-1}$ (paths to shorter paths), $\LL(C_n) = C_n$ (cycles to cycles), $\LL(K_{1,n}) = K_n$ (stars to complete graphs). In particular, a \emph{claw graph}, also known as the complete bipartite graph $K_{1,3},$ has a line graph of $C_3$. \citet{van1965interchange} showed that for a connected graph $G$, the repeated line graph transformations $\LL^{(n)}(G)$ has unbounded size as $n\rightarrow\infty$ if and only if $G$ is not a path, cycle or $K_{1,3}$. Additionally, we say a graph $G$ is \emph{triangle-containing} if $G$ has a subgraph $C_3$, and is \emph{triangle-free} if it does not.

\paragraph{Whitney's Isomorphism Theorem} 
Formulated by \citet{whitney1992congruent}, a clear relation between original graphs and their line graphs is established. Whitney's theorem demonstrates a one-to-one mapping between the line graph and its root graph except for exceptions of $C_3$ and $K_{1,3}$, providing the theoretical foundation for learning with line graphs.

\begin{theorem}[Whitney's Isomorphism Theorem  \citep{whitney1992congruent}]
\label{whitney}Let \( G \) and \( H \) be finite, connected graphs. Then \( G \) is isomorphic to \( H \) if and only if their line graphs are isomorphic, with the exception of the case where \( G \) and \( H \) is a pair of $C_3$ and $K_{1,3}$, in which case their line graphs are both isomorphic to \( C_3 \).
\end{theorem}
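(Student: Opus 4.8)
The plan is to dispose of the ``only if'' direction in one line and to obtain the ``if'' direction by reconstructing a connected root graph from its line graph through an essentially unique clique decomposition.

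For the ``only if'' direction, any isomorphism $\phi\colon V(G)\to V(H)$ induces a bijection $\hat\phi\colon E(G)\to E(H)$, $\{u,v\}\mapsto\{\phi(u),\phi(v)\}$, which preserves the relation ``share a common endpoint'', hence is an isomorphism $\LL(G)\to\LL(H)$; no exception arises here.

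For the ``if'' direction, fix a connected $G$ with $|V(G)|\ge 2$ and, for each $v\in V(G)$, let $Q_v\subseteq V(\LL(G))$ be the set of edges of $G$ incident to $v$. Directly from the construction of $\LL(G)$: each $Q_v$ induces a clique; the family $\mathcal Q(G)=\{Q_v : v\in V(G)\}$ partitions $E(\LL(G))$, since an edge of $\LL(G)$ joining $e$ and $e'$ lies in $Q_v$ for the unique common endpoint $v$ of $e$ and $e'$; and every vertex of $\LL(G)$ belongs to exactly two members of $\mathcal Q(G)$, namely the two $Q_v$ with $v$ an endpoint of the corresponding edge. Call a partition of $E(\LL(G))$ into cliques with every vertex in at most two parts a \emph{Krausz partition}. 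From any Krausz partition $\mathcal P$ one reassembles a graph $R(\mathcal P)$ whose vertices are the parts of $\mathcal P$ (with a private extra vertex appended for each vertex of $\LL(G)$ lying in only one part) and whose edges are the vertices of $\LL(G)$, each joining the one or two parts that contain it; one checks $R(\mathcal Q(G))\cong G$. Hence it suffices to show that $\mathcal Q(G)$ is, up to relabelling the parts, the \emph{only} Krausz partition of $\LL(G)$ whenever $\LL(G)\ncong K_3$: then an isomorphism $\LL(G)\cong\LL(H)$ transports $\mathcal Q(G)$ onto $\mathcal Q(H)$, so $G\cong R(\mathcal Q(G))\cong R(\mathcal Q(H))\cong H$.

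The uniqueness of the Krausz partition is the heart of the argument and the step I expect to be the main obstacle. The approach is local: a Krausz partition induces at each vertex $w$ of $\LL(G)$ a partition of the edges at $w$ into at most two cliques, equivalently a partition of $N_{\LL(G)}(w)$ into at most two cliques; a case analysis on the subgraph induced on the closed neighbourhood of $w$ shows this local partition is forced, and a consistency argument across adjacent vertices shows the forced local partitions glue to give exactly $\mathcal Q(G)$ — the lone obstruction being that this forcing can fail, which for connected $\LL(G)$ happens only when $\LL(G)\cong K_3$. It then remains to identify the connected root graphs with $\LL(G)\cong K_3$: since $|E(G)|=|V(\LL(G))|=3$ and $G$ is connected, $G$ is one of $P_4$, $C_3$, $K_{1,3}$, with line graphs $P_3$, $C_3$, $K_3$ respectively, so $\LL(G)\cong K_3$ precisely for $G\cong C_3$ and $G\cong K_{1,3}$, and indeed $\LL(C_3)\cong\LL(K_{1,3})\cong C_3$ — exactly the stated exception. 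The degenerate cases $|V(G)|\le 2$ ($\LL(K_1)$ empty, $\LL(K_2)=K_1$) have unique connected preimages and are dispatched by hand. I would invoke Krausz's classical characterisation for the \emph{existence} of the clique partition and concentrate the effort on the uniqueness analysis and the finite case check.
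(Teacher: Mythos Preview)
The paper does not prove Whitney's theorem at all; it is stated with a citation to \citet{whitney1992congruent} and then used as a black box in the proofs of Theorem~\ref{thm:equivalence} and Corollary~\ref{cor:n_step_linetransform}. There is therefore no in-paper proof to compare your proposal against.

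On its own merits, your outline is the standard Krausz-partition route to Whitney's theorem and is sound. The ``only if'' direction is immediate as you say, and for the ``if'' direction, reconstructing the root graph from an essentially unique clique decomposition of $\LL(G)$ is exactly the right mechanism; the two distinct Krausz partitions of $K_3$ (one triangle versus three edges) yield precisely the two preimages $K_{1,3}$ and $C_3$, matching the stated exception. One small wording issue: your claim that ``every vertex of $\LL(G)$ belongs to exactly two members of $\mathcal Q(G)$'' tacitly assumes $Q_u\neq Q_v$ for every edge $\{u,v\}\in E(G)$, which fails when $G=K_2$; you do quarantine $|V(G)|\le 2$ at the end, but the body of the argument should explicitly run under $|V(G)|\ge 3$. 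The uniqueness of the Krausz partition that you flag as the main obstacle is indeed where all the content lies, and completing it rigorously requires the local neighbourhood case analysis you describe; as a proposal this is appropriate, but a full proof would need those cases written out.
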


\paragraph{Beineke's Forbidden Induced Subgraphs}
In addition to the ``almost injective'' relationship between the root graph and its line graph transformation, there is an additional constraint on the constructed line graph.
\citet{beineke1970characterizations} introduced the characteristic of line graphs that a graph \(G\) is a line graph of another graph if and only if \(G\) does not contain any of Beineke's forbidden subgraphs as an induced subgraph (See Appendix \ref{app:beineke} for more details). As a direct consequence, we have the following key corollary, which will be instrumental in the next section.
\begin{corollary}
\label{thm:claw-free}
    Let $G$ be a simple and undirected graph, then $\LL(G)$ does not contain $K_{1,3}$ as an induced subgraph.
\end{corollary}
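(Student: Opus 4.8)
The plan is to derive Corollary~\ref{thm:claw-free} as a direct consequence of Beineke's characterization of line graphs. The key observation is that $K_{1,3}$ (the claw) is itself one of Beineke's nine forbidden induced subgraphs; indeed, it is the smallest and most famous one, since line graphs are classically known to be \emph{claw-free}. So the argument reduces to checking that $\LL(G)$ is a line graph of \emph{some} graph and then invoking Beineke's theorem.

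First I would note that, by construction, $\LL(G)$ is the line graph of the root graph $G$, so it is trivially a line graph. Then I would apply Beineke's theorem (stated in Appendix~\ref{app:beineke}): a finite simple graph is a line graph of some graph if and only if it contains none of Beineke's nine forbidden graphs as an induced subgraph. Since $K_{1,3}$ appears in this list of forbidden subgraphs, $\LL(G)$ cannot contain $K_{1,3}$ as an induced subgraph, which is exactly the claim.

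There is one mild subtlety worth flagging, and it is the only real content of the argument beyond quoting Beineke: one must ensure the statement is not vacuous or mis-scoped for disconnected or trivial $G$. If $G$ has isolated vertices they simply do not contribute to $\LL(G)$, and if $\LL(G)$ is empty or a single vertex the claim holds trivially; if $G$ is disconnected, $\LL(G)$ is a disjoint union of the line graphs of the components, and since an induced $K_{1,3}$ is connected it would have to lie inside one component's line graph, so the connected case suffices. Hence Beineke's characterization applies without caveat, and I would simply remark that the forbidden-subgraph list contains $K_{1,3}$ and conclude.

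The main (and essentially only) obstacle is purely expository: making sure that the reference to Beineke's forbidden subgraphs in Appendix~\ref{app:beineke} explicitly includes $K_{1,3}$ so the reader can see the corollary is immediate. No case analysis or calculation is needed — the corollary is a one-line deduction once Beineke's theorem is on the table.
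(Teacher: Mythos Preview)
Your proposal is correct and matches the paper's own proof essentially verbatim: the paper simply states that the corollary is a direct result of Beineke's forbidden subgraphs, since $K_{1,3}$ is one of them. Your additional remarks about disconnected and trivial $G$ are harmless extra care but not needed for the argument.
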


\section{Theoretical Framework}

We start by presenting the theoretical framework overview of this study, summarized as follows:
\begin{enumerate}
    \item We demonstrate the equivalence between isomorphism testing on line graphs after any number of transformations and the approximation of permutation-invariant functions on the connected root graph with the exception of the $K_{1,3}$ graph. (Theorem \ref{thm:equivalence}, Corollary \ref{cor:n_step_linetransform})
    \item We show that higher orders of CFI construction, as proposed by \citet{cai1992optimal}, are excluded from the set of line graphs. (Theorem \ref{thm:cfi})
    \item We prove that with the exception of $C_n$ when $3 \leq n \leq 5$, connected strongly regular graphs are transformed into non-strongly regular graphs by at most two line graph transformations. (Lemma \ref{lmm:k_small}, Theorem \ref{thm:tri-line-graph-regularity}, Theorem \ref{main_thm})
    \item We extend the theorem on equivalence and strongly regular graphs to disconnected graphs. (Corollary \ref{cor:disconnected_equiv}, Corollary \ref{cor:disconnected_srg})
\end{enumerate}
\subsection{Equivalence}

In this section, we first focus on connected graphs that are not isomorphic to the claw graph $K_{1,3}$. Theorem \ref{thm:equivalence} shows that approximating permutation-invariant functions on root graph $G$ is \emph{equivalent} to that on line graph $L(G)$ after a single transformation.

\begin{theorem}
\label{thm:equivalence}
 Let \(\mathcal{G}\) be a set of connected non-claw graphs and \(\mathcal{C}\) be a collection of functions, such that $\forall G_1,G_2\in\mathcal{G}$ such that \(G_1\ncong G_2\), \(\exists h\in \mathcal{C}\), \(h(\LL(G_1))\neq h(\LL(G_2))\). Then, \(\mathcal{C}\) can universally approximate any permutation-invariant function $f:\mathcal{G}\rightarrow \mathbb{R}$.
\end{theorem}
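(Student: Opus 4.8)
The plan is to combine two ingredients: Whitney's theorem (Theorem~\ref{whitney}), which makes the line-graph operator a faithful encoding of connected non-claw graphs, and the known equivalence between separating non-isomorphic inputs and universally approximating permutation-invariant functions. Write $\LL(\mathcal{G}) := \{\LL(G) : G \in \mathcal{G}\}$. The first step is to show that $\LL$ descends to a bijection between the isomorphism classes of $\mathcal{G}$ and those of $\LL(\mathcal{G})$: an isomorphism $G_1 \to G_2$ induces one $\LL(G_1) \to \LL(G_2)$, so $\LL$ is well defined on classes and surjective onto $\LL(\mathcal{G})$; and since no member of $\mathcal{G}$ is a claw, the only exceptional pair $\{C_3, K_{1,3}\}$ of Theorem~\ref{whitney} cannot occur inside $\mathcal{G}$, so Whitney's theorem yields the converse $\LL(G_1) \cong \LL(G_2) \Rightarrow G_1 \cong G_2$, i.e.\ injectivity on classes.

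Given this bijection, I would lift an arbitrary permutation-invariant $f : \mathcal{G} \to \mathbb{R}$ to $\hat f : \LL(\mathcal{G}) \to \mathbb{R}$ via $\hat f(\LL(G)) := f(G)$. This is well defined because $\LL(G_1) \cong \LL(G_2)$ forces $G_1 \cong G_2$ and hence $f(G_1) = f(G_2)$; it is permutation-invariant because it is constant on isomorphism classes; and by construction $\hat f \circ \LL = f$. The separation hypothesis then transfers verbatim: every pair of non-isomorphic elements of $\LL(\mathcal{G})$ is of the form $\LL(G_1), \LL(G_2)$ with $G_1 \ncong G_2$, so some $h \in \mathcal{C}$ satisfies $h(\LL(G_1)) \neq h(\LL(G_2))$; that is, $\mathcal{C}$ separates the isomorphism classes of $\LL(\mathcal{G})$.

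It remains to convert this point-separation property into uniform approximation of $\hat f$ by elements of $\mathcal{C}$ and to pull it back along $\LL$; the pull-back is immediate since $\sup_{G \in \mathcal{G}} |h(\LL(G)) - f(G)| = \sup_{L \in \LL(\mathcal{G})} |h(L) - \hat f(L)|$, so any $h$ that approximates $\hat f$ approximates $f$ in the required sense. For the conversion I would invoke the known equivalence between graph-isomorphism-discriminating power and universal approximation of invariant functions for function families with the usual closure properties (constants, finite products, and post-composition with multilayer perceptrons), as established in the GNN-expressivity literature; equivalently, under a bound on graph size the set of isomorphism classes appearing in $\LL(\mathcal{G})$ is finite, and a Stone--Weierstrass argument applies directly, with the bijection induced by $\LL$ transporting density of $\hat f$ to density of $f$.

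The main obstacle is exactly this last step: for a completely unrestricted collection $\mathcal{C}$, point-separation need not imply approximation, so the argument must either restrict to graphs of bounded size (to obtain compactness) or spell out the algebraic-closure assumptions on $\mathcal{C}$ that make a Stone--Weierstrass-type density argument go through. Everything preceding it---the Whitney bijection and the lifting of $f$ to $\hat f$---is routine, and the function of the non-claw hypothesis is precisely to rule out the single obstruction to Whitney's theorem.
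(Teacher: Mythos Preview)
Your proposal is correct and follows essentially the same route as the paper: use Whitney's theorem to identify isomorphism classes of $\mathcal{G}$ with those of $\LL(\mathcal{G})$, transfer the separation hypothesis to $\LL(\mathcal{G})$, and then invoke the equivalence between separating non-isomorphic graphs and universally approximating permutation-invariant functions. The paper handles what you flag as the ``main obstacle'' simply by citing that equivalence as a black box (the result of \citet{chen2019universal}, stated in the paper as Theorem~\ref{thm:unique_approximation}), without spelling out closure or compactness assumptions on $\mathcal{C}$; your more careful discussion of what is actually needed there is, if anything, more scrupulous than the paper's own argument.
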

 The proof idea is to leverage Whitney's isomorphism theorem (Theorem \ref{whitney}) and establish a correspondence between a root graph $G$ and the line graph $\LL(G)$ (given that $G \ncong K_{1,3}$), and apply Theorem \ref{thm:unique_approximation} (Appendix \ref{apd:univapprox}) to link the isomorphism testing to universal function approximation. Excluding claw graphs, any pair of non-isomorphic root graphs would have non-isomorphic line graphs. 
 
 Furthermore, we can extend Theorem \ref{thm:equivalence} to Corollary \ref{cor:n_step_linetransform} by the claw-free property of line graphs (Corollary \ref{thm:claw-free}). As $\LL(G)$ could not be $K_{1,3}$, it satisfies the assumption for Theorem \ref{thm:equivalence} to extend the equivalence of universal function approximation between $\LL^{(2)}(G)$ and $\LL(G).$ Thus, by induction, the theorem can be extended to $n$-step line graph transformations for an arbitrary number of $n$.

\begin{corollary}
\label{cor:n_step_linetransform}
    Let \(\mathcal{G}\) be a set of connected non-claw graphs and \(\mathcal{C}\) be a collection of functions. If $\forall G_1,G_2\in\mathcal{G}$, \(G_1\ncong G_2\), and \(\forall n \in \mathbb{N}\), \(\exists h\in \mathcal{C}\) such that \(h(\LL^{(n)}(G_1))\neq h(\LL^{(n)}(G_2))\) Then, \(\mathcal{C}\) can universally approximate any permutation-invariant function $f:\mathcal{G}\rightarrow \mathbb{R}$.
\end{corollary}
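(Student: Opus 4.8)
The plan is to prove the statement by induction on $n$, taking Theorem~\ref{thm:equivalence} as the base case $n=1$. For a fixed $n$, let $P(n)$ denote the assertion: for every set $\mathcal{G}$ of connected non-claw graphs and every collection $\mathcal{C}$, if for all non-isomorphic $G_1,G_2\in\mathcal{G}$ there is $h\in\mathcal{C}$ with $h(\LL^{(n)}(G_1))\neq h(\LL^{(n)}(G_2))$, then $\{\,h\circ\LL^{(n)}:h\in\mathcal{C}\,\}$ can universally approximate any permutation-invariant $f:\mathcal{G}\to\mathbb{R}$; this mirrors the formulation of Theorem~\ref{thm:equivalence}, which is exactly $P(1)$. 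The corollary is then the statement that $P(n)$ holds for every $n\in\mathbb{N}$.

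For the inductive step $P(n)\Rightarrow P(n+1)$, I would pass from $\mathcal{G}$ to the once-transformed family $\mathcal{G}'\coloneqq\{\LL(G):G\in\mathcal{G}\}$ (considered up to isomorphism) and apply $P(n)$ there, keeping the same $\mathcal{C}$. Three checks are needed. (i) \emph{$\mathcal{G}'$ is admissible.} The line graph of a connected graph (with at least one edge) is connected, and by Corollary~\ref{thm:claw-free} every $\LL(G)$ is claw-free, hence in particular $\LL(G)\ncong K_{1,3}$; so $\mathcal{G}'$ consists of connected non-claw graphs. This use of Corollary~\ref{thm:claw-free} is the one ingredient genuinely beyond Theorem~\ref{thm:equivalence}. (ii) \emph{The separation hypothesis descends.} Writing $\LL^{(n+1)}=\LL^{(n)}\circ\LL$: if $H_i=\LL(G_i)\in\mathcal{G}'$ are non-isomorphic, then $G_1\ncong G_2$ (since isomorphic graphs have isomorphic line graphs), so the level-$(n{+}1)$ separation hypothesis on $\mathcal{G}$ yields $h\in\mathcal{C}$ with $h(\LL^{(n)}(H_1))=h(\LL^{(n+1)}(G_1))\neq h(\LL^{(n+1)}(G_2))=h(\LL^{(n)}(H_2))$, i.e.\ $\mathcal{C}$ separates $\mathcal{G}'$ at level $n$. (iii) \emph{The target function descends.} Since the members of $\mathcal{G}$ are connected and none is $K_{1,3}$, the $C_3$/$K_{1,3}$ exception in Theorem~\ref{whitney} never applies to a pair drawn from $\mathcal{G}$, so $\LL$ is injective on isomorphism classes of $\mathcal{G}$; hence any permutation-invariant $f:\mathcal{G}\to\mathbb{R}$ factors as $f=g\circ\LL$ for a well-defined permutation-invariant $g:\mathcal{G}'\to\mathbb{R}$.

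With (i)--(iii) in place, $P(n)$ applied to $\mathcal{G}'$ gives that $\{\,h\circ\LL^{(n)}:h\in\mathcal{C}\,\}$ universally approximates $g$ on $\mathcal{G}'$; since $\LL$ maps $\mathcal{G}$ onto $\mathcal{G}'$, pulling the approximating family back along $\LL$ shows $\{\,h\circ\LL^{(n+1)}:h\in\mathcal{C}\,\}$ approximates $g\circ\LL=f$ on $\mathcal{G}$, which is $P(n+1)$. This last step needs only that the relevant notion of universal approximation (as fixed in Theorem~\ref{thm:unique_approximation}) is preserved under composition with the surjection $\LL:\mathcal{G}\to\mathcal{G}'$. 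I do not expect a deep obstacle here: the only delicate points are the bookkeeping of (i)--(iii), with (i) being the essential new step that invokes Corollary~\ref{thm:claw-free}, and trivial degenerate cases ($G\cong K_1$ or $K_2$, where $\LL(G)$ is a point or empty) can either be checked directly or are ruled out because the level-$n$ separation hypothesis already fails for them.
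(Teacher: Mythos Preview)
Your proposal is correct and follows essentially the same approach as the paper: both argue by induction, using Theorem~\ref{thm:equivalence} as the base case, Whitney's theorem (Theorem~\ref{whitney}) to transfer between levels, and Corollary~\ref{thm:claw-free} to ensure the non-claw hypothesis propagates so the induction can continue. Your write-up is more careful than the paper's in spelling out the three checks (admissibility of $\mathcal{G}'$, descent of the separation hypothesis, and descent of the target function), but the underlying argument is the same.
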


\subsection{Implication on challenging graphs}
The constraints on line graphs as described by \citet{beineke1970characterizations} offer a silver lining: despite the potential increase in graph size, the transformed graph may be structurally simpler. In this section, we introduce two challenging graphs in WL tests, namely CFI graphs and strongly regular graphs shown in Figure \ref{fig:srg_cfi_example}. We demonstrate how line graph transformation excludes the existence of such counter-examples for the isomorphism testing.
\begin{figure}
    \centering
    \subfigure[CFI]{  
    \begin{minipage}[t]{0.47\textwidth}
    \centering   \begin{tikzpicture}[scale=1.8]
      \draw
        (0.724, 0.221) node[draw, circle, fill=black, minimum size=4pt, inner sep=0pt] (0){}
        (0.407, 0.243) node[draw, circle, fill=black, minimum size=4pt, inner sep=0pt] (1){}
        (0.51, 0.249) node[draw, circle, fill=black, minimum size=4pt, inner sep=0pt] (2){}
        (0.922, 0.016) node[draw, circle, fill=black, minimum size=4pt, inner sep=0pt] (3){}
        (0.285, 0.429) node[draw, circle, fill=black, minimum size=4pt, inner sep=0pt] (4){}
        (0.419, 0.071) node[draw, circle, fill=black, minimum size=4pt, inner sep=0pt] (5){}
        (0.337, 0.499) node[draw, circle, fill=black, minimum size=4pt, inner sep=0pt] (6){}
        (0.506, 0.032) node[draw, circle, fill=black, minimum size=4pt, inner sep=0pt] (7){}
        (0.753, 0.405) node[draw, circle, fill=black, minimum size=4pt, inner sep=0pt] (8){}
        (0.523, 0.418) node[draw, circle, fill=black, minimum size=4pt, inner sep=0pt] (9){}
        (0.536, 0.331) node[draw, circle, fill=black, minimum size=4pt, inner sep=0pt] (10){}
        (0.966, 0.23) node[draw, circle, fill=black, minimum size=4pt, inner sep=0pt] (11){}
        (0.046, 0.538) node[draw, circle, fill=black, minimum size=4pt, inner sep=0pt] (12){}
        (0.081, 0.602) node[draw, circle, fill=black, minimum size=4pt, inner sep=0pt] (13){}
        (0.457, -0.243) node[draw, circle, fill=black, minimum size=4pt, inner sep=0pt] (14){}
        (0.361, -0.192) node[draw, circle, fill=black, minimum size=4pt, inner sep=0pt] (15){}
        (-0.722, -0.096) node[draw, circle, fill=black, minimum size=4pt, inner sep=0pt] (16){}
        (-0.779, 0.235) node[draw, circle, fill=black, minimum size=4pt, inner sep=0pt] (17){}
        (-0.547, -0.361) node[draw, circle, fill=black, minimum size=4pt, inner sep=0pt] (18){}
        (-0.634, 0.543) node[draw, circle, fill=black, minimum size=4pt, inner sep=0pt] (19){}
        (-0.788, -0.273) node[draw, circle, fill=black, minimum size=4pt, inner sep=0pt] (20){}
        (-0.811, 0.066) node[draw, circle, fill=black, minimum size=4pt, inner sep=0pt] (21){}
        (-0.586, -0.513) node[draw, circle, fill=black, minimum size=4pt, inner sep=0pt] (22){}
        (-0.637, 0.354) node[draw, circle, fill=black, minimum size=4pt, inner sep=0pt] (23){}
        (-0.358, 0.692) node[draw, circle, fill=black, minimum size=4pt, inner sep=0pt] (24){}
        (-0.377, 0.535) node[draw, circle, fill=black, minimum size=4pt, inner sep=0pt] (25){}
        (-0.584, 0.079) node[draw, circle, fill=black, minimum size=4pt, inner sep=0pt] (26){}
        (-0.688, 0.411) node[draw, circle, fill=black, minimum size=4pt, inner sep=0pt] (27){}
        (-0.424, -0.172) node[draw, circle, fill=black, minimum size=4pt, inner sep=0pt] (28){}
        (-0.568, 0.728) node[draw, circle, fill=black, minimum size=4pt, inner sep=0pt] (29){}
        (-0.795, -0.47) node[draw, circle, fill=black, minimum size=4pt, inner sep=0pt] (30){}
        (-0.791, -0.131) node[draw, circle, fill=black, minimum size=4pt, inner sep=0pt] (31){}
        (-0.596, -0.69) node[draw, circle, fill=black, minimum size=4pt, inner sep=0pt] (32){}
        (-0.572, 0.14) node[draw, circle, fill=black, minimum size=4pt, inner sep=0pt] (33){}
        (-0.301, 0.861) node[draw, circle, fill=black, minimum size=4pt, inner sep=0pt] (34){}
        (-0.331, 0.315) node[draw, circle, fill=black, minimum size=4pt, inner sep=0pt] (35){}
        (0.957, -0.291) node[draw, circle, fill=black, minimum size=4pt, inner sep=0pt] (36){}
        (0.813, -0.566) node[draw, circle, fill=black, minimum size=4pt, inner sep=0pt] (37){}
        (1.0, -0.078) node[draw, circle, fill=black, minimum size=4pt, inner sep=0pt] (38){}
        (0.856, -0.352) node[draw, circle, fill=black, minimum size=4pt, inner sep=0pt] (39){}
        (0.551, -0.648) node[draw, circle, fill=black, minimum size=4pt, inner sep=0pt] (40){}
        (0.598, -0.472) node[draw, circle, fill=black, minimum size=4pt, inner sep=0pt] (41){}
        (-0.294, -0.341) node[draw, circle, fill=black, minimum size=4pt, inner sep=0pt] (42){}
        (-0.396, -0.593) node[draw, circle, fill=black, minimum size=4pt, inner sep=0pt] (43){}
        (-0.149, -0.59) node[draw, circle, fill=black, minimum size=4pt, inner sep=0pt] (44){}
        (-0.149, -0.517) node[draw, circle, fill=black, minimum size=4pt, inner sep=0pt] (45){}
        (-0.387, -0.716) node[draw, circle, fill=black, minimum size=4pt, inner sep=0pt] (46){}
        (-0.371, -0.417) node[draw, circle, fill=black, minimum size=4pt, inner sep=0pt] (47){}
        (0.115, -0.503) node[draw, circle, fill=black, minimum size=4pt, inner sep=0pt] (48){}
        (0.125, -0.587) node[draw, circle, fill=black, minimum size=4pt, inner sep=0pt] (49){}
        (-0.197, 0.523) node[draw, circle, fill=black, minimum size=4pt, inner sep=0pt] (50){}
        (-0.155, 0.698) node[draw, circle, fill=black, minimum size=4pt, inner sep=0pt] (51){}
        (-0.146, 0.425) node[draw, circle, fill=black, minimum size=4pt, inner sep=0pt] (52){}
        (-0.096, 0.807) node[draw, circle, fill=black, minimum size=4pt, inner sep=0pt] (53){}
        (0.368, -0.584) node[draw, circle, fill=black, minimum size=4pt, inner sep=0pt] (54){}
        (0.393, -0.422) node[draw, circle, fill=black, minimum size=4pt, inner sep=0pt] (55){}
        (0.298, -0.38) node[draw, circle, fill=black, minimum size=4pt, inner sep=0pt] (56){}
        (0.324, -0.497) node[draw, circle, fill=black, minimum size=4pt, inner sep=0pt] (57){};
      \begin{scope}[-]
        \draw (0) to (1);
        \draw (0) to (2);
        \draw (0) to (3);
        \draw (1) to (4);
        \draw (1) to (5);
        \draw (2) to (6);
        \draw (2) to (7);
        \draw (3) to (36);
        \draw (4) to (10);
        \draw (4) to (12);
        \draw (5) to (9);
        \draw (5) to (15);
        \draw (6) to (9);
        \draw (6) to (13);
        \draw (7) to (10);
        \draw (7) to (14);
        \draw (8) to (9);
        \draw (8) to (10);
        \draw (8) to (11);
        \draw (11) to (38);
        \draw (12) to (50);
        \draw (12) to (51);
        \draw (13) to (52);
        \draw (13) to (53);
        \draw (14) to (54);
        \draw (14) to (55);
        \draw (15) to (56);
        \draw (15) to (57);
        \draw (16) to (17);
        \draw (16) to (18);
        \draw (17) to (19);
        \draw (18) to (42);
        \draw (18) to (43);
        \draw (19) to (24);
        \draw (20) to (21);
        \draw (20) to (22);
        \draw (21) to (23);
        \draw (22) to (46);
        \draw (22) to (47);
        \draw (23) to (25);
        \draw (24) to (50);
        \draw (24) to (53);
        \draw (25) to (51);
        \draw (25) to (52);
        \draw (26) to (27);
        \draw (26) to (28);
        \draw (27) to (29);
        \draw (28) to (42);
        \draw (28) to (47);
        \draw (29) to (34);
        \draw (30) to (31);
        \draw (30) to (32);
        \draw (31) to (33);
        \draw (32) to (43);
        \draw (32) to (46);
        \draw (33) to (35);
        \draw (34) to (51);
        \draw (34) to (53);
        \draw (35) to (50);
        \draw (35) to (52);
        \draw (36) to (37);
        \draw (37) to (40);
        \draw (38) to (39);
        \draw (39) to (41);
        \draw (40) to (54);
        \draw (40) to (57);
        \draw (41) to (55);
        \draw (41) to (56);
        \draw (42) to (44);
        \draw (43) to (45);
        \draw (44) to (46);
        \draw (44) to (49);
        \draw (45) to (47);
        \draw (45) to (48);
        \draw (48) to (55);
        \draw (48) to (57);
        \draw (49) to (54);
        \draw (49) to (56);
      \end{scope}
    \end{tikzpicture}
    \begin{tikzpicture}[scale=1.8]
      \draw
        (0.727, 0.247) node[draw, circle, fill=black, minimum size=4pt, inner sep=0pt] (0){}
        (0.408, 0.245) node[draw, circle, fill=black, minimum size=4pt, inner sep=0pt] (1){}
        (0.538, 0.296) node[draw, circle, fill=black, minimum size=4pt, inner sep=0pt] (2){}
        (0.93, 0.05) node[draw, circle, fill=black, minimum size=4pt, inner sep=0pt] (3){}
        (0.255, 0.431) node[draw, circle, fill=black, minimum size=4pt, inner sep=0pt] (4){}
        (0.421, 0.077) node[draw, circle, fill=black, minimum size=4pt, inner sep=0pt] (5){}
        (0.325, 0.519) node[draw, circle, fill=black, minimum size=4pt, inner sep=0pt] (6){}
        (0.501, 0.067) node[draw, circle, fill=black, minimum size=4pt, inner sep=0pt] (7){}
        (0.727, 0.437) node[draw, circle, fill=black, minimum size=4pt, inner sep=0pt] (8){}
        (0.513, 0.434) node[draw, circle, fill=black, minimum size=4pt, inner sep=0pt] (9){}
        (0.474, 0.357) node[draw, circle, fill=black, minimum size=4pt, inner sep=0pt] (10){}
        (0.952, 0.274) node[draw, circle, fill=black, minimum size=4pt, inner sep=0pt] (11){}
        (0.018, 0.539) node[draw, circle, fill=black, minimum size=4pt, inner sep=0pt] (12){}
        (0.061, 0.607) node[draw, circle, fill=black, minimum size=4pt, inner sep=0pt] (13){}
        (0.467, -0.207) node[draw, circle, fill=black, minimum size=4pt, inner sep=0pt] (14){}
        (0.379, -0.195) node[draw, circle, fill=black, minimum size=4pt, inner sep=0pt] (15){}
        (-0.72, -0.126) node[draw, circle, fill=black, minimum size=4pt, inner sep=0pt] (16){}
        (-0.791, 0.204) node[draw, circle, fill=black, minimum size=4pt, inner sep=0pt] (17){}
        (-0.533, -0.383) node[draw, circle, fill=black, minimum size=4pt, inner sep=0pt] (18){}
        (-0.658, 0.517) node[draw, circle, fill=black, minimum size=4pt, inner sep=0pt] (19){}
        (-0.778, -0.307) node[draw, circle, fill=black, minimum size=4pt, inner sep=0pt] (20){}
        (-0.816, 0.032) node[draw, circle, fill=black, minimum size=4pt, inner sep=0pt] (21){}
        (-0.566, -0.538) node[draw, circle, fill=black, minimum size=4pt, inner sep=0pt] (22){}
        (-0.653, 0.327) node[draw, circle, fill=black, minimum size=4pt, inner sep=0pt] (23){}
        (-0.388, 0.678) node[draw, circle, fill=black, minimum size=4pt, inner sep=0pt] (24){}
        (-0.4, 0.52) node[draw, circle, fill=black, minimum size=4pt, inner sep=0pt] (25){}
        (-0.589, 0.055) node[draw, circle, fill=black, minimum size=4pt, inner sep=0pt] (26){}
        (-0.706, 0.384) node[draw, circle, fill=black, minimum size=4pt, inner sep=0pt] (27){}
        (-0.417, -0.19) node[draw, circle, fill=black, minimum size=4pt, inner sep=0pt] (28){}
        (-0.599, 0.706) node[draw, circle, fill=black, minimum size=4pt, inner sep=0pt] (29){}
        (-0.776, -0.504) node[draw, circle, fill=black, minimum size=4pt, inner sep=0pt] (30){}
        (-0.786, -0.165) node[draw, circle, fill=black, minimum size=4pt, inner sep=0pt] (31){}
        (-0.567, -0.717) node[draw, circle, fill=black, minimum size=4pt, inner sep=0pt] (32){}
        (-0.578, 0.116) node[draw, circle, fill=black, minimum size=4pt, inner sep=0pt] (33){}
        (-0.337, 0.85) node[draw, circle, fill=black, minimum size=4pt, inner sep=0pt] (34){}
        (-0.344, 0.301) node[draw, circle, fill=black, minimum size=4pt, inner sep=0pt] (35){}
        (1.0, -0.035) node[draw, circle, fill=black, minimum size=4pt, inner sep=0pt] (36){}
        (0.866, -0.314) node[draw, circle, fill=black, minimum size=4pt, inner sep=0pt] (37){}
        (0.975, -0.257) node[draw, circle, fill=black, minimum size=4pt, inner sep=0pt] (38){}
        (0.84, -0.536) node[draw, circle, fill=black, minimum size=4pt, inner sep=0pt] (39){}
        (0.611, -0.443) node[draw, circle, fill=black, minimum size=4pt, inner sep=0pt] (40){}
        (0.582, -0.632) node[draw, circle, fill=black, minimum size=4pt, inner sep=0pt] (41){}
        (-0.278, -0.354) node[draw, circle, fill=black, minimum size=4pt, inner sep=0pt] (42){}
        (-0.374, -0.61) node[draw, circle, fill=black, minimum size=4pt, inner sep=0pt] (43){}
        (-0.122, -0.598) node[draw, circle, fill=black, minimum size=4pt, inner sep=0pt] (44){}
        (-0.131, -0.523) node[draw, circle, fill=black, minimum size=4pt, inner sep=0pt] (45){}
        (-0.356, -0.733) node[draw, circle, fill=black, minimum size=4pt, inner sep=0pt] (46){}
        (-0.357, -0.431) node[draw, circle, fill=black, minimum size=4pt, inner sep=0pt] (47){}
        (0.133, -0.497) node[draw, circle, fill=black, minimum size=4pt, inner sep=0pt] (48){}
        (0.152, -0.585) node[draw, circle, fill=black, minimum size=4pt, inner sep=0pt] (49){}
        (-0.223, 0.514) node[draw, circle, fill=black, minimum size=4pt, inner sep=0pt] (50){}
        (-0.187, 0.692) node[draw, circle, fill=black, minimum size=4pt, inner sep=0pt] (51){}
        (-0.161, 0.421) node[draw, circle, fill=black, minimum size=4pt, inner sep=0pt] (52){}
        (-0.127, 0.803) node[draw, circle, fill=black, minimum size=4pt, inner sep=0pt] (53){}
        (0.39, -0.412) node[draw, circle, fill=black, minimum size=4pt, inner sep=0pt] (54){}
        (0.406, -0.498) node[draw, circle, fill=black, minimum size=4pt, inner sep=0pt] (55){}
        (0.355, -0.56) node[draw, circle, fill=black, minimum size=4pt, inner sep=0pt] (56){}
        (0.309, -0.351) node[draw, circle, fill=black, minimum size=4pt, inner sep=0pt] (57){};
      \begin{scope}[-]
        \draw (0) to (1);
        \draw (0) to (2);
        \draw (0) to (3);
        \draw (1) to (4);
        \draw (1) to (5);
        \draw (2) to (6);
        \draw (2) to (7);
        \draw (3) to (38);
        \draw (4) to (10);
        \draw (4) to (12);
        \draw (5) to (9);
        \draw (5) to (15);
        \draw (6) to (9);
        \draw (6) to (13);
        \draw (7) to (10);
        \draw (7) to (14);
        \draw (8) to (9);
        \draw (8) to (10);
        \draw (8) to (11);
        \draw (11) to (36);
        \draw (12) to (50);
        \draw (12) to (51);
        \draw (13) to (52);
        \draw (13) to (53);
        \draw (14) to (54);
        \draw (14) to (55);
        \draw (15) to (56);
        \draw (15) to (57);
        \draw (16) to (17);
        \draw (16) to (18);
        \draw (17) to (19);
        \draw (18) to (42);
        \draw (18) to (43);
        \draw (19) to (24);
        \draw (20) to (21);
        \draw (20) to (22);
        \draw (21) to (23);
        \draw (22) to (46);
        \draw (22) to (47);
        \draw (23) to (25);
        \draw (24) to (50);
        \draw (24) to (53);
        \draw (25) to (51);
        \draw (25) to (52);
        \draw (26) to (27);
        \draw (26) to (28);
        \draw (27) to (29);
        \draw (28) to (42);
        \draw (28) to (47);
        \draw (29) to (34);
        \draw (30) to (31);
        \draw (30) to (32);
        \draw (31) to (33);
        \draw (32) to (43);
        \draw (32) to (46);
        \draw (33) to (35);
        \draw (34) to (51);
        \draw (34) to (53);
        \draw (35) to (50);
        \draw (35) to (52);
        \draw (36) to (37);
        \draw (37) to (40);
        \draw (38) to (39);
        \draw (39) to (41);
        \draw (40) to (54);
        \draw (40) to (57);
        \draw (41) to (55);
        \draw (41) to (56);
        \draw (42) to (44);
        \draw (43) to (45);
        \draw (44) to (46);
        \draw (44) to (49);
        \draw (45) to (47);
        \draw (45) to (48);
        \draw (48) to (55);
        \draw (48) to (57);
        \draw (49) to (54);
        \draw (49) to (56);
      \end{scope}
    \end{tikzpicture}
    \end{minipage}
}
    \subfigure[Strongly Regular]{  
    \begin{minipage}[t]{0.47\textwidth}
    \centering 
\begin{tikzpicture}[scale = 1.5]
      \draw
        (0.76, 0.027) node[draw, circle, fill=black, minimum size=4pt, inner sep=0pt] (0){}
        (1.0, 0.352) node[draw, circle, fill=black, minimum size=4pt, inner sep=0pt] (1){}
        (0.708, 0.789) node[draw, circle, fill=black, minimum size=4pt, inner sep=0pt] (2){}
        (0.316, 0.692) node[draw, circle, fill=black, minimum size=4pt, inner sep=0pt] (3){}
        (-0.027, 0.76) node[draw, circle, fill=black, minimum size=4pt, inner sep=0pt] (4){}
        (-0.352, 1.0) node[draw, circle, fill=black, minimum size=4pt, inner sep=0pt] (5){}
        (-0.789, 0.708) node[draw, circle, fill=black, minimum size=4pt, inner sep=0pt] (6){}
        (-0.692, 0.316) node[draw, circle, fill=black, minimum size=4pt, inner sep=0pt] (7){}
        (-0.76, -0.027) node[draw, circle, fill=black, minimum size=4pt, inner sep=0pt] (8){}
        (-1.0, -0.352) node[draw, circle, fill=black, minimum size=4pt, inner sep=0pt] (9){}
        (-0.708, -0.789) node[draw, circle, fill=black, minimum size=4pt, inner sep=0pt] (10){}
        (-0.316, -0.692) node[draw, circle, fill=black, minimum size=4pt, inner sep=0pt] (11){}
        (0.027, -0.76) node[draw, circle, fill=black, minimum size=4pt, inner sep=0pt] (12){}
        (0.352, -1.0) node[draw, circle, fill=black, minimum size=4pt, inner sep=0pt] (13){}
        (0.789, -0.708) node[draw, circle, fill=black, minimum size=4pt, inner sep=0pt] (14){}
        (0.692, -0.316) node[draw, circle, fill=black, minimum size=4pt, inner sep=0pt] (15){};
      \begin{scope}[-]
        \draw (0) to (1);
        \draw (0) to (2);
        \draw (0) to (3);
        \draw (0) to (4);
        \draw (0) to (8);
        \draw (0) to (12);
        \draw (1) to (2);
        \draw (1) to (3);
        \draw (1) to (5);
        \draw (1) to (9);
        \draw (1) to (13);
        \draw (2) to (3);
        \draw (2) to (6);
        \draw (2) to (10);
        \draw (2) to (14);
        \draw (3) to (7);
        \draw (3) to (11);
        \draw (3) to (15);
        \draw (4) to (5);
        \draw (4) to (6);
        \draw (4) to (7);
        \draw (4) to (8);
        \draw (4) to (12);
        \draw (5) to (6);
        \draw (5) to (7);
        \draw (5) to (9);
        \draw (5) to (13);
        \draw (6) to (7);
        \draw (6) to (10);
        \draw (6) to (14);
        \draw (7) to (11);
        \draw (7) to (15);
        \draw (8) to (9);
        \draw (8) to (10);
        \draw (8) to (11);
        \draw (8) to (12);
        \draw (9) to (10);
        \draw (9) to (11);
        \draw (9) to (13);
        \draw (10) to (11);
        \draw (10) to (14);
        \draw (11) to (15);
        \draw (12) to (13);
        \draw (12) to (14);
        \draw (12) to (15);
        \draw (13) to (14);
        \draw (13) to (15);
        \draw (14) to (15);
      \end{scope}
    \end{tikzpicture}
\begin{tikzpicture}[scale=1.5]
      \draw
        (1.0, 0.282) node[draw, circle, fill=black, minimum size=4pt, inner sep=0pt] (0){}
        (0.602, -0.02) node[draw, circle, fill=black, minimum size=4pt, inner sep=0pt] (1){}
        (0.211, 0.564) node[draw, circle, fill=black, minimum size=4pt, inner sep=0pt] (2){}
        (0.643, 0.816) node[draw, circle, fill=black, minimum size=4pt, inner sep=0pt] (3){}
        (-0.282, 1.0) node[draw, circle, fill=black, minimum size=4pt, inner sep=0pt] (4){}
        (0.02, 0.602) node[draw, circle, fill=black, minimum size=4pt, inner sep=0pt] (5){}
        (-0.564, 0.211) node[draw, circle, fill=black, minimum size=4pt, inner sep=0pt] (6){}
        (-0.816, 0.643) node[draw, circle, fill=black, minimum size=4pt, inner sep=0pt] (7){}
        (-1.0, -0.282) node[draw, circle, fill=black, minimum size=4pt, inner sep=0pt] (8){}
        (-0.602, 0.02) node[draw, circle, fill=black, minimum size=4pt, inner sep=0pt] (9){}
        (-0.211, -0.564) node [draw, circle, fill=black, minimum size=4pt, inner sep=0pt](10){}
        (-0.643, -0.816) node[draw, circle, fill=black, minimum size=4pt, inner sep=0pt] (11){}
        (0.282, -1.0) node[draw, circle, fill=black, minimum size=4pt, inner sep=0pt] (12){}
        (-0.02, -0.602) node[draw, circle, fill=black, minimum size=4pt, inner sep=0pt] (13){}
        (0.564, -0.211) node[draw, circle, fill=black, minimum size=4pt, inner sep=0pt] (14){}
        (0.816, -0.643) node[draw, circle, fill=black, minimum size=4pt, inner sep=0pt] (15){};
      \begin{scope}[-]
        \draw (0) to (1);
        \draw (0) to (3);
        \draw (0) to (4);
        \draw (0) to (5);
        \draw (0) to (12);
        \draw (0) to (15);
        \draw (1) to (2);
        \draw (1) to (5);
        \draw (1) to (6);
        \draw (1) to (12);
        \draw (1) to (13);
        \draw (2) to (3);
        \draw (2) to (6);
        \draw (2) to (7);
        \draw (2) to (13);
        \draw (2) to (14);
        \draw (3) to (4);
        \draw (3) to (7);
        \draw (3) to (14);
        \draw (3) to (15);
        \draw (4) to (5);
        \draw (4) to (7);
        \draw (4) to (8);
        \draw (4) to (9);
        \draw (5) to (6);
        \draw (5) to (9);
        \draw (5) to (10);
        \draw (6) to (7);
        \draw (6) to (10);
        \draw (6) to (11);
        \draw (7) to (8);
        \draw (7) to (11);
        \draw (8) to (9);
        \draw (8) to (11);
        \draw (8) to (12);
        \draw (8) to (13);
        \draw (9) to (10);
        \draw (9) to (13);
        \draw (9) to (14);
        \draw (10) to (11);
        \draw (10) to (14);
        \draw (10) to (15);
        \draw (11) to (12);
        \draw (11) to (15);
        \draw (12) to (13);
        \draw (12) to (15);
        \draw (13) to (14);
        \draw (14) to (15);
      \end{scope}
    \end{tikzpicture}
    \end{minipage}}
    \caption{Example of a pair of CFI graphs and a pair of strongly regular graphs.}
    \label{fig:srg_cfi_example}
\end{figure}
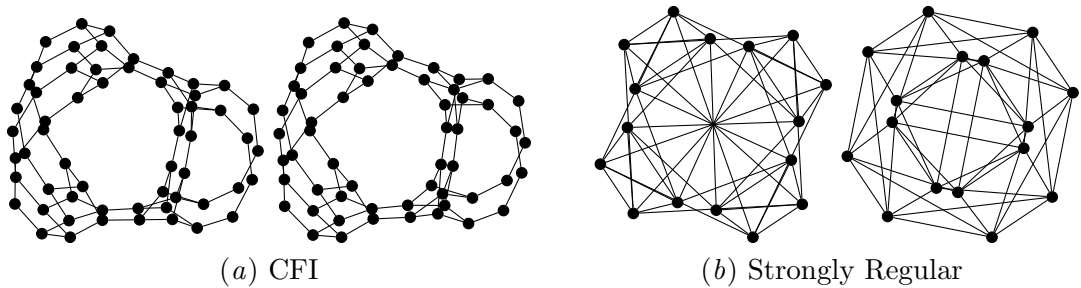

\subsubsection{Cai-Fürer-Immerman Graphs}

The Cai-Fürer-Immerman (CFI) graphs~\citep{cai1992optimal} are famously constructed to demonstrate the difficulty of the graph isomorphism problem, especially regarding WL tests, shown in Figure \ref{fig:srg_cfi_example}, (a). In fact, \citet{cai1992optimal} has shown that there exists a CFI construction that is indistinguishable under the \(k\)-WL test for any fixed \(k\). The CFI graph involves the construction of $X_k = (V_k, E_k)$ with
$V_k = A_k \cup B_k \cup M_k$ where 
\begin{align*}
A_k &= \{a_i \mid 1 \leq i \leq k\}, B_k = \{b_i \mid 1 \leq i \leq k\} \\
M_k &= \{m_S \mid S \subseteq \{1, \ldots, k\}, |S| \text{ is even}\} \\
E_k &= \{(m_S, a_i) \mid i \in S\} \cup \{(m_S, b_i) \mid i \notin S\}.
\end{align*}
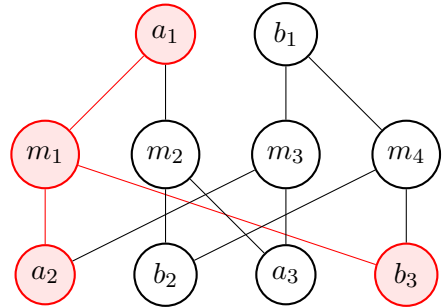
\begin{wrapfigure}{r}{6.5cm}
\centering
\begin{tikzpicture}[node distance=1.6cm,
                    main node/.style={circle, draw=black, thick, minimum size=7mm},
                    red node/.style={circle, draw=red, thick, fill=red!10, minimum size=7mm},
                    ] % Red edge style
  \tikzset{
    edge/.style = {draw},
    red edge/.style = {draw=red}
  }
  \begin{scope}[scale=0.8]

  % Nodes with LaTeX labels, specifying red nodes
  \node[red node] (m1) {${m_1}$};
  \node[main node] (m2) [right of=m1] {${m_2}$};
  \node[main node] (m3) [right of=m2] {${m_3}$};
  \node[main node] (m4) [right of=m3] {${m_4}$};
  
  \node[red node] (a1) [above of=m2] {${a_1}$};
  \node[red node] (a2) [below of=m1] {${a_2}$};
  \node[main node] (a3) [below of=m3] {${a_3}$};
  
  \node[main node] (b1) [above of=m3] {${b_1}$};
  \node[main node] (b2) [below of=m2] {${b_2}$};
  \node[red node] (b3) [below of=m4] {${b_3}$};
  
  % Edges, specifying red edges
  \draw[red edge] (m1) -- (a2);
  \draw[red edge] (m1) -- (a1);
  \draw[edge] (m2) -- (a1);
  \draw[red edge] (m1) -- (b3);
  \draw[edge] (m2) -- (b2);
  \draw[edge] (m2) -- (a3);
  \draw[edge] (m3) -- (a2);
  \draw[edge] (m3) -- (a3);
  \draw[edge] (m3) -- (b1);
  \draw[edge] (m4) -- (b1);
  \draw[edge] (m4) -- (b2);
  \draw[edge] (m4) -- (b3);
  \end{scope}
\end{tikzpicture}
\caption{An CFI graph substructure $X_3$ and its induced subgraph $K_{1,3}$ in red.}
\label{fig:X3}
\vspace{-4em}
\end{wrapfigure}

However, we observe that applying line graph transformation on CFI graphs can provably remove this construction, since the CFI graphs $X_k$ with $k \geq 3$ has to include the claw graph $K_{1,3}$ as an induced subgraph, thus ruling them out from being a line graph due to Corollary $\ref{thm:claw-free}$. Figure \ref{fig:X3} presents an example of $K_{1,3}$ as a subgraph of $X_3$. 

\begin{theorem}
    \label{thm:cfi}
    Line graphs do not include CFI graphs constructed with $X_k$ for $k\geq3$.
\end{theorem}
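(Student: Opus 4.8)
The plan is to show that every CFI graph built from gadgets $X_k$ with $k \ge 3$ contains the claw $K_{1,3}$ as an \emph{induced} subgraph, and then to conclude via Corollary~\ref{thm:claw-free}, which states that no line graph admits $K_{1,3}$ as an induced subgraph. Since the property of containing an induced $K_{1,3}$ is inherited by every induced supergraph, it suffices to exhibit such a claw inside a single gadget and to verify that the inter-gadget edges of the full CFI construction leave it intact.

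First I would locate the claw inside a single gadget $X_k$. By the definition of $E_k$, every edge of $X_k$ joins a middle vertex $m_S \in M_k$ to an outer vertex in $A_k \cup B_k$; in particular $A_k \cup B_k$ is an independent set, and each $m_S$ has degree $|S| + (k - |S|) = k \ge 3$. Taking $S = \emptyset$, the vertex $m_\emptyset$ is adjacent precisely to $b_1, \dots, b_k$, and the vertices $b_1, b_2, b_3$ are pairwise non-adjacent, so $\{m_\emptyset, b_1, b_2, b_3\}$ induces a $K_{1,3}$ in $X_k$ (this is the configuration highlighted in red in Figure~\ref{fig:X3}).

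Next I would lift this claw to the full CFI graph over a base graph $G$. In that construction the gadget replacing a base vertex of degree $k$ is a copy of $X_k$ (or its twisted variant $\tilde{X}_k$), and the only edges added between gadgets join an outer vertex $a_i$ or $b_i$ of one gadget to an outer vertex of the gadget at the other end of base-edge $i$: no new edge is incident to a middle vertex, and no new edge joins two vertices of the same gadget. Hence the neighbourhood of $m_\emptyset$ is unchanged, the vertices $b_1, b_2, b_3$ (all lying in one gadget) remain pairwise non-adjacent, and $\{m_\emptyset, b_1, b_2, b_3\}$ is still an induced $K_{1,3}$ in the full CFI graph. The twisted gadget $\tilde{X}_k$ is handled identically after replacing $\emptyset$ by a set of the parity used there; in all cases some middle vertex of degree $k \ge 3$ has $k$ pairwise non-adjacent outer neighbours. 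Applying Corollary~\ref{thm:claw-free} then shows that the CFI graph is not a line graph, which proves the theorem; the hypothesis $k \ge 3$ is precisely what provides a middle vertex of degree at least $3$, and indeed for $k \le 2$ this obstruction disappears.

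The step I expect to require the most care is the lifting argument: one must fix the exact CFI construction in use --- how the outer vertices $a_i, b_i$ are indexed by the incident base-edges, whether standard or twisted gadgets appear, and how the inter-gadget edges are wired --- and check that none of those edges accidentally connects two of the three chosen leaves or attaches to the chosen centre. The remainder is a one-line degree count inside a single gadget together with the bipartite-type observation that $A_k \cup B_k$ is independent in $X_k$.
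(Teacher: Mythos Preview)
Your proposal is correct and follows the same approach as the paper: exhibit an induced $K_{1,3}$ centred at some middle vertex $m_S$ with three pairwise non-adjacent neighbours in $A_k\cup B_k$, then invoke Corollary~\ref{thm:claw-free}. Your treatment is in fact more careful than the paper's, which simply asserts the existence of the induced claw without explicitly arguing that the inter-gadget wiring of the full CFI construction leaves it intact.
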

 Theorem \ref{thm:cfi} demonstrates that after line graph transformation, complex CFI graphs would be non-existent. This prevents graph isomorphism tests and GNNs from dealing with these difficult instances, potentially assisting these models to distinguish challenging graph pairs and thus allowing GNNs to better learn on these graphs with the help of Theorem \ref{thm:equivalence}.

\subsubsection{Strongly Regular Graphs}

Another well-known example of the limitation of the WL test is its challenge to differentiate between strongly regular graphs (See Appendix \ref{apd:regular} for details). To distinguish non-isomorphic strongly regular graphs, graph isomorphism tests or GNNs that are 4-WL expressive are required~\citep{bouritsas2020srg3wl}. We present an example strongly regular graph pair that is indistinguishable by 3-WL in Figure \ref{fig:srg_cfi_example}, (b). 

In this section, we demonstrate that the application of line graphs disrupts the strong regularity of these graphs, with the exception of three specific cases. 
We start with the following lemma to showcase the simple cases where the root graph contains $C_3$ as a subgraph (i.e. triangle-containing) and is not complete.

\begin{theorem}
\label{thm:tri-line-graph-regularity}
Let $G$ be a connected, strongly regular graph which is not the complete graph $K_n$ for $n \geq 2$, and $G$ contains $C_3$ as a subgraph. Then $\LL(G)$ is not strongly regular.
\end{theorem}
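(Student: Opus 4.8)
The plan is to show that $\LL(G)$ violates the requirement that adjacent vertices have a constant number of common neighbours. Write $G$ as a strongly regular graph with parameters $\srg(n,k,\lambda,\mu)$. By Lemma~\ref{degree_line_graph}, every vertex of $\LL(G)$ (corresponding to an edge $uv$ of $G$) has degree $d_G(u)+d_G(v)-2 = 2k-2$, so $\LL(G)$ is at least regular; it therefore suffices to exhibit two adjacent pairs of vertices of $\LL(G)$ with different common-neighbour counts.

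The core step is a local count. An edge of $\LL(G)$ is an unordered pair of distinct edges of $G$ which, since $G$ is simple, meet in a unique vertex $u$; write these edges as $uv$ and $uw$ with $u,v,w$ pairwise distinct. A common neighbour of $uv$ and $uw$ in $\LL(G)$ is an edge $f$ of $G$, different from both, meeting $\{u,v\}$ and $\{u,w\}$. If $u\in f$ this is automatic, and $f=ux$ with $x\in N(u)\setminus\{v,w\}$, giving $d_G(u)-2 = k-2$ choices; if $u\notin f$ then $f$ must contain both $v$ and $w$, i.e.\ $f=vw$, which is a (distinct) edge exactly when $vw\in E(G)$. Hence $uv$ and $uw$ have $k-1$ common neighbours in $\LL(G)$ when $\{u,v,w\}$ spans a triangle of $G$, and $k-2$ otherwise.

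To finish, I produce one edge of $\LL(G)$ of each type. Since $G$ contains $C_3$, fix a triangle $\{u,v,w\}$; then $\{uv,uw\}$ is an edge of $\LL(G)$ with exactly $k-1$ common neighbours. For an edge of the other type, observe that if every vertex of $G$ had a clique as its neighbourhood, then ``adjacent or equal'' would be a transitive relation, so $G$ would be a disjoint union of cliques and, being connected, would equal $K_n$ --- excluded by hypothesis. Hence some vertex $u$ has two non-adjacent neighbours $v,w$, and $\{uv,uw\}$ is an edge of $\LL(G)$ with only $k-2$ common neighbours. Since $k-1\neq k-2$, the number of common neighbours of adjacent vertices of $\LL(G)$ is not constant, so $\LL(G)$ is not strongly regular.

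I do not expect a genuine obstacle: the argument uses only that $G$ is regular, connected, non-complete, and triangle-containing (strong regularity beyond regularity is not needed). The points that need care are purely bookkeeping: verifying in the count that $f\neq uv,uw$ in each case and that distinct edges of a simple graph share at most one vertex, and the short structural fact that a connected graph all of whose vertex-neighbourhoods are cliques must be complete; the remainder is assembly.
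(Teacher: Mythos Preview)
Your proof is correct and takes a genuinely different, more elementary route than the paper. Both arguments begin the same way: for adjacent vertices $uv,uw$ of $\LL(G)$ with hinge $u$, the common-neighbour count is $k-1$ or $k-2$ according as $vw\in E(G)$ or not. The paper, however, assumes $\LL(G)$ is strongly regular, fixes $\lambda^*=k-1$ from the triangle case, then analyses a \emph{non-adjacent} pair $e_2,e_3$ in $\LL(G)$ to pin $\mu^*\in\{2,3,4\}$, and finally plugs $v^*=\tfrac{vk}{2}$, $k^*=2(k-1)$, $\lambda^*=k-1$ into the identity $(v^*-k^*-1)\mu^*=k^*(k^*-\lambda^*-1)$ to obtain a numerical contradiction for each value of $\mu^*$. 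You instead stay with the $\lambda$-parameter and directly exhibit adjacent pairs with counts $k-1$ and $k-2$, the latter via the clean structural fact that a connected graph whose open neighbourhoods are all cliques is complete. This sidesteps the $\mu^*$ case analysis and the parameter equation entirely, and --- as you note --- actually uses only regularity of $G$, so it proves a slightly stronger statement than the paper's approach.
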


Furthermore, we show that 
with a finite number of exceptions, the application of most second-order line graph transformations effectively disrupts the strong regularity of graphs. The resulting graphs therefore may no longer require 4-WL tests or equivalent GNN architectures for graph isomorphism testing.

\begin{theorem}
Let $G$ be a connected graph that is not the cycle graphs $C_3$, $C_4$, or $C_5$. Then, applying at most two line graph transformations to $G$ yields a graph that is not strongly regular.
\label{main_thm}
\end{theorem}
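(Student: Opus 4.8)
The plan is to obtain the statement from Theorem~\ref{thm:tri-line-graph-regularity} applied one level higher, namely to $\LL(G)$ in place of $G$. It suffices to prove the conditional claim: if $\LL(G)$ is strongly regular, then $\LL^{(2)}(G)$ is not strongly regular. Indeed, if $\LL(G)$ is already not strongly regular we are done after a single transformation, so ``at most two transformations suffice'' is exactly the disjunction of these two cases. Accordingly, I would assume throughout that $\LL(G)$ is strongly regular; in particular $\LL(G)$ is connected, regular, and (by the convention recalled in Appendix~\ref{apd:regular}) neither complete nor edgeless, and by Corollary~\ref{thm:claw-free} it is claw-free.

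The core of the argument is to verify the two structural hypotheses of Theorem~\ref{thm:tri-line-graph-regularity} for the graph $\LL(G)$: that it is not complete (immediate from the standing assumption, since a strongly regular graph is by definition non-complete; if one prefers not to lean on the convention, one checks that $\LL(G)$ is complete only when the edges of $G$ pairwise share an endpoint, forcing $G$ to be a star $K_{1,n}$ or the triangle $C_3$, whence $\LL(G)$ is again a complete graph, a contradiction), and that it contains $C_3$. For the latter I would argue by contradiction. If $\LL(G)$ were triangle-free, then $G$ could contain neither a vertex of degree $\ge 3$ (the three incident edges would be mutually adjacent in $\LL(G)$) nor a triangle (its three edges would be mutually adjacent in $\LL(G)$). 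Hence $G$ has maximum degree at most $2$ and is triangle-free, so, being connected, $G$ is a path $P_m$ or a cycle $C_m$ with $m \ge 4$. If $G = P_m$, then $\LL(G) = P_{m-1}$, which is regular only when it equals $P_1$ or $P_2$, neither of which is strongly regular, contradicting the assumption. If $G = C_m$ with $m \ge 4$, then $\LL(G) = C_m$; since a cycle $C_m$ is strongly regular only when $m \le 5$, and since $\LL(G) = C_m$ forces $G = C_m$ by Whitney's theorem (Theorem~\ref{whitney}), combining with $m \ge 4$ gives $G \in \{C_4, C_5\}$, which is excluded by hypothesis. Every branch yields a contradiction, so $\LL(G)$ contains $C_3$.

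Putting these together, $\LL(G)$ is a connected strongly regular graph that is neither complete nor triangle-free, so Theorem~\ref{thm:tri-line-graph-regularity} applied to $\LL(G)$ gives that $\LL(\LL(G)) = \LL^{(2)}(G)$ is not strongly regular, which is the conditional claim and hence the theorem. The small and degenerate cases surfacing along the way, such as $G = K_1$ or $K_2$ and the cycles $C_m$ with $m \le 5$, are where I expect Lemma~\ref{lmm:k_small} to be invoked, and the extension to disconnected $G$ (Corollary~\ref{cor:disconnected_srg}) should follow because line graph transformation acts componentwise and disconnected graphs are strongly regular only in degenerate situations.

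The step I expect to be the main obstacle is the triangle-free case: one must enumerate precisely which graphs $G$ have a triangle-free line graph, track how the assumed regularity of $\LL(G)$ pins down the short paths and cycles that can arise, and fix the exact convention for ``strongly regular'' (whether complete graphs, $C_3 = K_3$, and the edgeless graph are admitted) so that the list of exceptions comes out to be \emph{exactly} $\{C_3, C_4, C_5\}$ and nothing more; in particular the star graphs $K_{1,n}$, whose line graphs are complete, and $K_{1,3}$ specifically should be double-checked against this. Theorem~\ref{thm:tri-line-graph-regularity} carries the remaining analytical weight and is used here as a black box.
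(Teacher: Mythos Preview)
Your reformulation --- prove that either $\LL(G)$ or $\LL^{(2)}(G)$ is not strongly regular --- is strictly stronger than the theorem, and it is \emph{false}. You correctly flag stars as needing a check, and they are precisely where it breaks: for $G = K_{1,4}$ one has $\LL(G) = K_4$, which under the paper's convention \emph{is} strongly regular (complete graphs are not excluded; see the $K_5$ entry in Figure~\ref{fig:srg_graph_examples} and the list in Lemma~\ref{lmm:k_small}), and $\LL^{(2)}(G) = \LL(K_4)$ is the octahedron $K_{2,2,2} = \srg(6,4,2,4)$, again strongly regular. So neither one nor two transformations helps; the theorem survives for $K_{1,4}$ only because $K_{1,4}$ itself is not regular, i.e.\ zero transformations already suffice. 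Concretely, your claim ``$\LL(G)$ strongly regular $\Rightarrow$ $\LL(G)$ not complete'' misreads the paper's convention, and the fallback argument you offer (``$G$ must be a star or $C_3$, whence $\LL(G)$ is complete, a contradiction'') is circular --- re-deriving that $\LL(G)$ is complete contradicts nothing. The same convention issue hits your path branch: $P_1$ and $P_2 = K_2$ are listed as strongly regular in Lemma~\ref{lmm:k_small}, so ``neither of which is strongly regular'' is wrong there too.

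The repair is exactly the case you dropped: first dispose of ``$G$ not strongly regular'' with zero transformations, as the paper's Case~1 does. Once $G$ is assumed connected strongly regular with $G \notin \{C_3,C_4,C_5\}$, your two verifications do go through: $\LL(G)$ complete now forces $G$ to be a star (not regular for $\ge 2$ leaves, contradiction), $C_3$ (excluded), or one of $K_1,K_2$ (handled directly as in the paper's Case~2); and your max-degree-$\le 2$ argument for a triangle in $\LL(G)$ then pins $G$ to $K_1$, $K_2$, or a cycle, all already covered. After this patch the two proofs coincide on the substantive step --- applying Theorem~\ref{thm:tri-line-graph-regularity} to $\LL(G)$ --- though the paper organizes the preliminary case-split on $G$ (triangle-containing / triangle-free / complete, its Cases~3--5) rather than on $\LL(G)$.
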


\subsubsection{Regular graphs}
It is worth mentioning that the line graph of a regular graph remains regular \citep{RAMANE2005reg}. 
Thus, there exists a pair of graphs, \( G_1 \) and \( G_2 \), such that are at least 2-WL indistinguishable between \( \LL^{(n)}(G_1) \) and \( \LL^{(n)}(G_2) \) for any \( n \geq 0 \).

\subsection{Extension on disconnected graphs}
Notice that all of the aforementioned theorems assume that the root graph is connected. In fact, we can show that given a disconnected graph where each of its connected components is not isomorphic to $K_{1,3}$ or short paths, Corollary \ref{cor:n_step_linetransform} still holds.

\begin{corollary}
    \label{cor:disconnected_equiv}
    Let \(\mathcal{G}\) be a set of graphs such that $\forall G\in\mathcal{G}$, for a fixed $n$, $G$ does not have a component isomorphic to $K_{1,3}$ or $P_k$, $k\leq n$ and Let \(\mathcal{C}\) be a collection of functions. If $\forall G_1,G_2\in\mathcal{G}$ such that \(G_1\ncong G_2\), \(\exists h\in \mathcal{C}\) such that \(h(\LL^{(n)}(G_1))\neq h(\LL^{(n)}(G_2))\). Then, \(\mathcal{C}\) can universally approximate any permutation-invariant function $f:\mathcal{G}\rightarrow \mathbb{R}$.
\end{corollary}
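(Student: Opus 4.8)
The plan is to mirror the argument behind Theorem \ref{thm:equivalence} and Corollary \ref{cor:n_step_linetransform}, the only new ingredient being a Whitney-type rigidity statement for \emph{disconnected} graphs. It suffices to prove that, under the stated component hypotheses, the map $G\mapsto\LL^{(n)}(G)$ is injective up to isomorphism on $\mathcal{G}$: once this is known, given a permutation-invariant $f:\mathcal{G}\to\mathbb{R}$ we define $\tilde f$ on $\LL^{(n)}(\mathcal{G})$ by $\tilde f(\LL^{(n)}(G))=f(G)$ (well-defined exactly because of the injectivity), observe that $\tilde f$ is permutation-invariant and that the hypothesis states that $\mathcal{C}$ separates non-isomorphic graphs in $\LL^{(n)}(\mathcal{G})$, and finally invoke Theorem \ref{thm:unique_approximation} to get, for every $\varepsilon>0$, some $h\in\mathcal{C}$ with $|h(\LL^{(n)}(G))-f(G)|<\varepsilon$ uniformly in $G\in\mathcal{G}$. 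The case $n=0$ is immediate, so assume $n\ge 1$.

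To establish the injectivity, I would first record the elementary identity $\LL(G'\sqcup G'')=\LL(G')\sqcup\LL(G'')$ --- edges in distinct components never share a vertex --- so that for the decomposition $G=\bigsqcup_i A_i$ into connected components one has $\LL^{(n)}(G)=\bigsqcup_i\LL^{(n)}(A_i)$. Next I would show by induction on $0\le m\le n$ that each $\LL^{(m)}(A_i)$ is connected and nonempty: the line graph of a connected graph with at least one edge is connected, and the only way a connected graph collapses to a single vertex or the empty graph within $n$ steps is to be a path $P_k$ with $k\le n$ (running the collapse backwards, Whitney's theorem identifies each preimage uniquely as a path), which the hypothesis forbids. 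Hence $\bigsqcup_i\LL^{(n)}(A_i)$ is precisely the decomposition of $\LL^{(n)}(G)$ into connected components, and by uniqueness of that decomposition an isomorphism $\LL^{(n)}(G_1)\cong\LL^{(n)}(G_2)$ induces a bijection $\sigma$ between the component index sets with $\LL^{(n)}(A_i)\cong\LL^{(n)}(B_{\sigma(i)})$ for all $i$.

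It remains to prove componentwise that $\LL^{(n)}(A)\cong\LL^{(n)}(B)$ forces $A\cong B$ for connected $A,B$ whose iterated line graphs do not degenerate. I would iterate Whitney's theorem (Theorem \ref{whitney}) downward from level $n$: at the step from level $m$ to level $m-1$, either $\LL^{(m-1)}(A)\cong\LL^{(m-1)}(B)$, or $\{\LL^{(m-1)}(A),\LL^{(m-1)}(B)\}=\{C_3,K_{1,3}\}$; the latter is impossible for $m-1\ge 1$ because $\LL^{(m-1)}(\cdot)$ is a line graph and hence claw-free by Corollary \ref{thm:claw-free}, and impossible for $m-1=0$ because $K_{1,3}$ is forbidden as a component. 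Thus the isomorphism descends all the way to $A\cong B$, and reassembling over components yields $G_1=\bigsqcup_i A_i\cong\bigsqcup_i B_{\sigma(i)}=G_2$.

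The step I expect to be the main obstacle is the bookkeeping in the last two paragraphs: simultaneously controlling connectedness and nonemptiness of every intermediate iterate $\LL^{(m)}(A_i)$, and the $C_3/K_{1,3}$ exception of Whitney's theorem at each level. The two hypotheses are exactly what makes this work: forbidding short paths $P_k$ ($k\le n$) stops components (in particular isolated vertices) from disappearing under iterated transformation, with the growth dichotomy of \citet{van1965interchange} confirming that every other connected graph persists; forbidding $K_{1,3}$ components, together with claw-freeness of every positive line graph iterate (Corollary \ref{thm:claw-free}), eliminates the $C_3/K_{1,3}$ ambiguity at every level. Cycles and surviving longer paths need no special care, since the sole exception of Whitney's theorem --- a $C_3$ being simultaneously the line graph of $C_3$ and of $K_{1,3}$ --- is already precluded by the $K_{1,3}$-exclusion.
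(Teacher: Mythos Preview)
Your proposal is correct and follows essentially the same approach as the paper: reduce to componentwise injectivity of $G\mapsto\LL^{(n)}(G)$, use the componentwise behaviour of the line graph transformation, invoke the path exclusion to prevent components from vanishing and the $K_{1,3}$ exclusion together with claw-freeness (Corollary \ref{thm:claw-free}) to rule out the Whitney exception at every level, then conclude via Theorem \ref{thm:unique_approximation} as in Corollary \ref{cor:n_step_linetransform}. Your write-up is in fact more careful than the paper's own sketch, making explicit the inductive descent through the iterates and the role each hypothesis plays at each level.
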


Similarly, assuming a disconnected graph where each of its connected components is not isomorphic to $C_3$, Corollary \ref{main_thm} holds for strongly regular graphs.  

\begin{corollary}
    \label{cor:disconnected_srg}
    Let $G$ be a graph that is not the cycle graphs $C_3$, $C_4$, $C_5$, or a disjoint group of $C_3$. Then, applying at most two line graph transformations to $G$ yields a graph that is not strongly regular.
\end{corollary}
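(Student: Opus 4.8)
The plan is to reduce to the connected case, which is exactly Theorem~\ref{main_thm}, by exploiting the fact that line graph transformation commutes with disjoint union. Write $G = G_1 \sqcup \cdots \sqcup G_c$ as the disjoint union of its connected components. Since two edges lying in different components of $G$ are never adjacent, $\LL(G) = \LL(G_1) \sqcup \cdots \sqcup \LL(G_c)$, and iterating, $\LL^{(2)}(G) = \LL^{(2)}(G_1) \sqcup \cdots \sqcup \LL^{(2)}(G_c)$; an edgeless component contributes nothing to $\LL(G)$, so we may assume every $G_i$ has at least one edge. If $c = 1$ the claim is Theorem~\ref{main_thm} verbatim (with the same exceptions $C_3, C_4, C_5$), so assume $c \ge 2$.

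Next I would invoke the structure of disconnected strongly regular graphs: a disconnected strongly regular graph must be a disjoint union of $m \ge 2$ copies of a single complete graph $K_r$ (equivalently, its parameter $\mu$ is $0$); under the convention that strongly regular graphs are required to be connected this case is already vacuous. So the substantive task is to rule out that $\LL(G)$ and $\LL^{(2)}(G)$ are \emph{simultaneously} of the form $mK_r$. Suppose, toward a contradiction, that both are strongly regular; being disconnected (each $\LL(G_i)$ is non-empty), each is a disjoint union of equal complete graphs.

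Then each surviving component $\LL^{(2)}(G_i)$ is a single complete graph. Chasing this backwards one line-graph step at a time — $\LL(F) \cong K_s$ forces $F \cong K_{1,s}$, or $F \cong C_3$ when $s = 3$, while $\LL(G_i)$ is itself a line graph and hence \emph{claw-free} by Corollary~\ref{thm:claw-free}, ruling out $K_{1,s}$ for $s \ge 3$ as an intermediate — pins each $G_i$ to the finite list $\{P_3, P_4, C_3, K_{1,3}\}$, whose $\LL$-images are $K_2, P_3, K_3, K_3$ respectively. Demanding that $\LL(G) = \bigsqcup_i \LL(G_i)$ also be strongly regular eliminates all but the last two: a $P_4$-component injects the non-regular graph $P_3$ into $\LL(G)$, while a $P_3$- or $K_2$-component forces one of $\LL(G), \LL^{(2)}(G)$ to be edgeless or a non-uniform disjoint union of complete graphs. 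Hence every $G_i \in \{C_3, K_{1,3}\}$ and $c \ge 2$, so $\LL(G) \cong \LL^{(2)}(G) \cong mK_3$ and $G$ is a disjoint union of triangles and claws; since $\LL(K_{1,3}) = C_3$, this is precisely the excluded \emph{disjoint group of $C_3$}, the desired contradiction.

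I expect the main obstacle to be the component-tracing in the third step: determining exactly which small connected graphs become complete after one or two line-graph transformations, using claw-freeness (Corollary~\ref{thm:claw-free}) to keep this list finite, and then reconciling $K_{1,3}$-components — which also line-transform to $K_3$ — with the stated exception. Some additional care is needed around degenerate conventions (whether complete or edgeless graphs count as strongly regular, and whether $G$ may contain isolated vertices), which shift the precise form of the exception list but not the overall strategy.
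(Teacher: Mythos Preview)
Your approach is correct in spirit but takes a more circuitous route than the paper's, and that detour is precisely what creates the $K_{1,3}$ obstacle you flag at the end. The paper's argument is shorter: it first assumes $G$ itself is strongly regular (otherwise zero transformations already suffice) and applies the disconnected-SRG classification to $G$ directly, obtaining $G = mK_n$. Then $n \le 2$ gives an eventually edgeless iterated line graph; $n \ge 4$ makes $\LL(G)$ fail strong regularity because each $\LL(K_n)$-component has $\mu = 4$ internally while cross-component non-adjacent pairs have $\mu = 0$; and $n = 3$ is the fixed point $\LL(mK_3) = mK_3$, the stated exception.

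Your backward trace from $\LL^{(2)}(G)$ is valid as far as it goes, but the framing ``rule out that $\LL(G)$ and $\LL^{(2)}(G)$ are simultaneously of the form $mK_r$'' drops $G$ from the contradiction hypothesis, and that omission is exactly what leaves $K_{1,3}$-components dangling. Concretely, take $G = K_{1,3} \sqcup K_{1,3}$: then $\LL(G) = \LL^{(2)}(G) = 2K_3$ are both (disconnected) strongly regular under the convention that makes the exception list meaningful, yet $G$ is \emph{not} a disjoint group of $C_3$. What rescues the statement is that such a $G$ is not even regular, so zero transformations handle it. Once you put $G$ back into the hypothesis, though, $G$ is itself a disconnected SRG, hence already of the form $mK_n$ --- and at that point your component-tracing via claw-freeness becomes superfluous, since the paper's short forward case-split on $n$ finishes immediately. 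So the obstacle you anticipate dissolves by applying the $mK_r$ classification one step earlier, to $G$ rather than to $\LL(G)$.
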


\section{Empirical Evidence}

We validate our theoretical findings through graph classification experiments, designed to answer the following question: \emph{What are the effects of line graph transformation on regular graphs and CFI graphs?}

\subsection{Experimental Setup}
\paragraph{Dataset} We utilize the BREC dataset \citep{wang2024brec} to examine the effects of line graph transformation on both CFI graphs and regular graphs. The dataset consists of 100 pairs of CFI graphs and 120 pairs of regular graphs. Among the regular graphs, 50 pairs are simple regular graphs, and 70 pairs are strongly regular graphs. These graph pairs are pre-processed into two distinct groups: with and without line graph transformation.

\paragraph{Experimental design} In our experimental setup, we conduct graph isomorphism tests on 100 pairs of CFI graphs and 140 pairs of regular graphs, both before and after applying the line graph transformation, using the 3-WL test. Additionally, we apply the 4-WL on the root graph pairs and compare its expressive power on root graphs with that of 3-WL on their corresponding line graphs. To further demonstrate, we also evaluate a GNN model, Provably Powerful Graph Networks (PPGN)~\citep{maron2020provablypowerfulgraphnetworks}, whose expressive power is bounded by 3-WL (See Appendix \ref{apd:second} for more details).

\subsection{Results}

\paragraph{Simple and strongly regular graphs} We present the experiment results in Table \ref{tab:experiment-1}. First, we note that the 3-WL or more expressive algorithms successfully distinguish all pairs of simple regular graphs in the dataset. Regarding strongly regular graphs which are distinguishable by 4-WL but not 3-WL, we observe that 3-WL can successfully differentiate all pairs of strongly regular graphs after applying the line graph transformation, which was previously indistinguishable. This is consistent with our theoretical analysis, suggesting that the strong regularity of graphs may be disrupted after line graph transformation.

\paragraph{CFI graphs} It is noteworthy that the number of pairs of CFI graphs successfully distinguished by the 3-WL algorithm does not increase after a single line graph transformation. Although our theoretical analysis indicates transformed graph does not belong to the CFI graph, the resulting line graph still poses significant challenges for the WL algorithm.

\paragraph{WL and PPGN} We provide additional experimental insights into the comparison between WL algorithms and a graph neural network instance PPGN. In the regular graph experiments, PPGN aligns its performance with 3-WL before and after the line graph transformation. 
However, when applied to CFI graphs, PPGN performs slightly worse on the line graphs compared to the root graphs. As noted in~\citet{wang2024brec}, PPGN struggles to match the performance of 3-WL on CFI graphs due to their large radii. The line graph transformation further increases the size of the CFI graphs, magnifying this issue and leading to decreased performance in PPGN experiments on the line graphs.

\begin{table*}
\vspace{-0.15in}
\caption{The accuracies of distinguishing graph pairs in the BREC regular and CFI graphs.}
\vspace{-0.15in}
\label{tab:experiment-1}
\begin{center}
% \begin{sc}
\resizebox{1.0\textwidth}{!}{
\begin{tabular}{cccccccccccc}
\toprule
~ & ~ & \multicolumn{2}{c}{Simple Regular (50)} & \multicolumn{2}{c}{Strongly Regular (70)}  & \multicolumn{2}{c}{CFI (100)} & \\
\cmidrule(r{0.5em}){3-4} \cmidrule(l{0.5em}){5-6}\cmidrule(l{0.5em}){7-8}\cmidrule(l{0.5em}){9-10}\cmidrule(l{0.5em}){11-12}
Graph & Model &  Number & Accuracy & Number & Accuracy    & Number & Accuracy  \\
\midrule
\multirow{3}{*}{$G$}
& 3-WL & 50 & 100\%  & 0 & 0\%  & 60 & 60\%  \\
& 4-WL & 50 & 100\% & 70 & 100\%  & 80 & 80\% \\
& PPGN & 50 & 100\%  & 0 & 0\%  & 22 &22\%  \\
\midrule
\multirow{2}{*}{$L(G)$}
& 3-WL & 50 & 100\% & 70 & 100\%  & 60 & 60\%  \\
& PPGN & 50 & 100\% & 70 & 100\%  & 15 & 15\%  \\

\bottomrule
\end{tabular}
}
\end{center}
\end{table*}

\section{Conclusions}
In this study, we provide a theoretical analysis of applying GNNs on line graphs. With mild assumptions, we show that line graphs are equivalent to the original graphs for isomorphism testing and permutation-invariant function approximation.
In particular, we focus on two challenging graph classes for the isomorphism test, namely CFI graphs and strongly regular graphs, and show that both classes can be excluded or reduced by the application of line graph transformation. Empirically, strongly regular graphs after line graphs are 3-WL distinguishable, whereas the line graphs of CFI graphs remain challenging to the WL tests.

\acks{The authors would like to acknowledge the use of the University of Oxford Advanced Research Computing (ARC) facility in carrying out this work. \url{http://dx.doi.org/10.5281/zenodo.22558}}. In addition, F.Y. would like to thank Dr. Seth Flaxman for his support in attending the conference.

\bibliography{pmlr-sample}
\newpage

\appendix
\section{Weisfeiler-Leman test}
\label{apd:wltest}
The \textit{Weisfeiler-Leman (WL) test} is a graph isomorphism test widely applied in the field of graph theory and graph-based learning. In particular, the \emph{k-dimensional Weisfeiler-Leman algorithm (k-WL)} operates by iteratively assigning colors to $k$-tuples of nodes in a graph. Formally, let \( G = (V, E) \) be a graph, where \( V \) is the set of nodes and \( E \) the set of edges. For a fixed dimension \( k \), the $k$-WL algorithm starts by assigning an initial colouring \( \chi^{(0)}_k \) to every $k$-tuple \( \mathbf{u} = (u_1, \ldots, u_k) \) of nodes. The coloring is defined based on the isomorphism type of the subgraph induced by the nodes in the tuple \( \mathbf{u} \). At each iteration \( r \), the algorithm refines the coloring as follows:

\[
\chi^{(r+1)}_k(\mathbf{u}) = \left( \chi^{(r)}_k(\mathbf{u}), \left\{\chi^{(r)}_k(\mathbf{u}[w/i]) \mid w \in V, i \in [k] \right\} \right),
\]

where \( \mathbf{u}[w/i] \) denotes the tuple obtained by replacing the \( i \)-th node in \( \mathbf{u} \) with node \( w \). The refinement process terminates when further iterations no longer produce a different coloring. 

\section{Universal approximation over permutation-invariant functions}
\label{apd:univapprox}
The link between the universal approximation capabilities of GNNs with respect to graphs and the testing of graph isomorphism was demonstrated by \citet{chen2019universal}. 
In their work, it was shown that the universal approximation of permutation-invariant functions on graphs is equivalent to graph isomorphism testing. This equivalence between universal approximation and graph isomorphism testing stated as Theorem \ref{thm:unique_approximation} serves as the foundation for evaluating the expressiveness of different GNN architectures based on the expressivity of $k$-WL tests.

\begin{theorem}[\citet{chen2019universal}]
\label{thm:unique_approximation}
    A function class is capable of universally approximating permutation-invariant functions on graphs with finite node attributes if and only if it can discriminate non-isomorphic graphs.
\end{theorem}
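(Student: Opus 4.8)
The plan is to prove the two implications separately; the forward one is routine and the converse is the substantive part.

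\textbf{Discrimination is necessary.} Suppose $\mathcal{C}$ universally approximates every permutation-invariant function on the domain of graphs under consideration, and fix $G_1 \ncong G_2$. Their isomorphism classes are distinct points (or disjoint closed sets) in the space of graphs modulo isomorphism, so there is a continuous permutation-invariant function $f$ that is identically $1$ on the class of $G_1$ and identically $0$ on the class of $G_2$. Approximating $f$ to within $1/3$ by some $h \in \mathcal{C}$ forces $h(G_1) > 2/3 > 1/3 > h(G_2)$, so $\mathcal{C}$ discriminates $G_1$ from $G_2$.

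\textbf{Discrimination is sufficient.} Here I would first pin down the domain: because the node attributes are finite (equivalently, valued in a fixed compact set) and the node count is bounded, the set $\mathcal{X}$ of graphs modulo isomorphism, with the quotient of the natural topology, is a compact metric space (indeed finite if one prefers); moreover a permutation-invariant function is precisely one that factors through $\mathcal{X}$, and continuity transfers. The hypothesis that $\mathcal{C}$ discriminates all non-isomorphic graphs is exactly the statement that the elements of $\mathcal{C}$, viewed as functions on $\mathcal{X}$, separate its points. I would then invoke the Stone--Weierstrass theorem: the unital subalgebra generated by a point-separating family of continuous functions on a compact space is uniformly dense in $\mathrm{C}(\mathcal{X})$.

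\textbf{The main obstacle.} Stone--Weierstrass only returns density of the algebra \emph{generated by} $\mathcal{C}$ — finite sums of finite products of elements of $\mathcal{C}$ together with constants — and $\mathcal{C}$ itself is typically not closed under these operations, so one more step is needed. The resolution, which must be tailored to the function class of interest, is to absorb the algebra operations into the model's read-out: a single discriminating element of $\mathcal{C}$ already gives an injective embedding of $\mathcal{X}$ into a Euclidean space, and composing that embedding with a universal approximator on compacta (for GNNs, the final MLP; abstractly, polynomials, which on a finite domain span everything by a Vandermonde argument) realizes every continuous function of the embedding — in particular every element of the generated algebra — by a map that still belongs to $\mathcal{C}$. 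Combining compactness of $\mathcal{X}$, Stone--Weierstrass, and this absorption step yields universal approximation and closes the equivalence. I expect verifying the absorption step for a concrete architecture to be the only delicate point; everything else is soft topology.
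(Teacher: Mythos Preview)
The paper does not actually prove this theorem; it is quoted verbatim from \citet{chen2019universal} and used as a black box in the proofs of Theorem~\ref{thm:equivalence} and Corollary~\ref{cor:n_step_linetransform}. There is no ``paper's own proof'' to compare against.

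That said, your sketch is the standard route taken in the cited work. The forward direction is exactly as you describe. For the converse, Chen et al.\ proceed via compactness of the graph domain (finite attributes, bounded size) and Stone--Weierstrass, and then handle precisely the obstacle you flag: the separating family $\mathcal{C}$ is not itself an algebra, so one needs the function class to be closed under post-composition with a universal approximator (an MLP read-out in the GNN case) so that polynomials in the separating coordinates can be absorbed back into $\mathcal{C}$. Your identification of this ``absorption step'' as the only architecture-dependent ingredient is accurate; once that closure property is assumed or verified, the argument is complete. The one place to be slightly more careful is your parenthetical ``indeed finite if one prefers'': with finitely many attribute values and bounded node count the quotient $\mathcal{X}$ really is finite, in which case Stone--Weierstrass is overkill and a direct Vandermonde/interpolation argument suffices, but if the attributes merely lie in a compact continuum then $\mathcal{X}$ is genuinely infinite and Stone--Weierstrass is needed.
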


\section{Beineke's forbidden induced subgraphs}
\label{app:beineke}

\citet{beineke1970characterizations} introduced the following 9 graphs, shown in Figure \ref{fig:b9} to be the forbidden induced subgraphs that characterize line graphs. The theorem states that a graph $H$ is a line graph of a root graph $G$ root if and only if $H$ does not contain any of the nine Beineke's forbidden graphs as an induced subgraph.
\begin{figure}[h]
\centering
    \centering
    \includegraphics[width=1\textwidth]{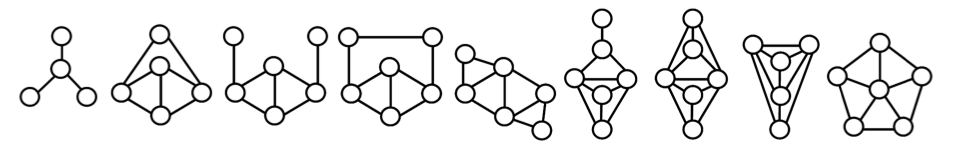}
\caption{Nine Beineke's forbidden induced subgraphs.}
\label{fig:b9}
\end{figure}

\section{Details on regular graphs}
\label{apd:regular}
\subsection{ Regular graphs}
A regular graph is a graph where each node has the same number of adjacent nodes, called the degree of the graph. In other words, a graph is $k$-regular if every node has exactly $k$ neighbors. Examples of regular graphs include cycles and complete graphs.

\subsection{Strongly regular graphs}
\label{srg_def}
 A regular graph of $v$ nodes is defined as strongly regular if there are positive integers \(k\), \(\lambda\), and \(\mu\) satisfying 
\begin{enumerate}
    \item every node is connected to \(k\) other nodes,
    \item each pair of connected nodes shares \(\lambda\) mutual neighbours, and
    \item every pair of nodes that are not directly connected shares \(\mu\) mutual neighbors.
\end{enumerate} 
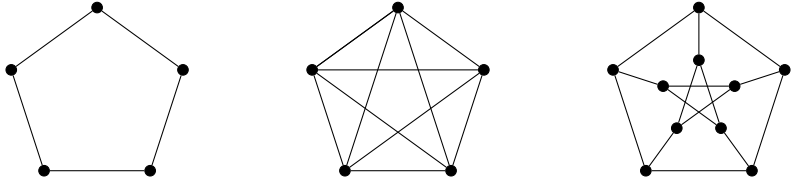
\begin{figure}[h]
\centering
\begin{tikzpicture}
    % First Graph
    \begin{scope}
        \node[draw, circle, fill=black, minimum size=4pt, inner sep=0pt](a1) at (90:1.2) {};
    \node[draw, circle, fill=black, minimum size=4pt, inner sep=0pt](b1) at (162:1.2) {};
    \node[draw, circle, fill=black, minimum size=4pt, inner sep=0pt](c1) at (234:1.2) {};
    \node[draw, circle, fill=black, minimum size=4pt, inner sep=0pt](d1) at (306:1.2) {};
    \node[draw, circle, fill=black, minimum size=4pt, inner sep=0pt](e1) at (18:1.2) {};
        \draw (a1) -- (b1) node[midway, left] {};
        \draw (b1) -- (c1) node[midway, right] {};
        \draw (c1) -- (d1) node[midway, right] {};
        \draw (d1) -- (e1) node[midway, right] {};
        \draw (e1) -- (a1) node[midway, right] {};
        
    \end{scope}
    \begin{scope}[xshift=4cm]
        \node[draw, circle, fill=black, minimum size=4pt, inner sep=0pt](a1) at (90:1.2) {};
    \node[draw, circle, fill=black, minimum size=4pt, inner sep=0pt](b1) at (162:1.2) {};
    \node[draw, circle, fill=black, minimum size=4pt, inner sep=0pt](c1) at (234:1.2) {};
    \node[draw, circle, fill=black, minimum size=4pt, inner sep=0pt](d1) at (306:1.2) {};
    \node[draw, circle, fill=black, minimum size=4pt, inner sep=0pt](e1) at (18:1.2) {};
        \draw (a1) -- (b1) node[midway, left] {};
        \draw (b1) -- (a1) node[midway, right] {};
        \draw (c1) -- (a1) node[midway, right] {};
        \draw (d1) -- (a1) node[midway, right] {};
        \draw (b1) -- (c1) node[midway, right] {};
        \draw (b1) -- (d1) node[midway, right] {};
        \draw (b1) -- (e1) node[midway, right] {};
        \draw (c1) -- (d1) node[midway, right] {};
        \draw (c1) -- (e1) node[midway, right] {};
        \draw (d1) -- (e1) node[midway, right] {};
        \draw (e1) -- (a1) node[midway, right] {};

    \end{scope}
    
    \begin{scope}[xshift=8cm]
        
    % Outer cycle (pentagon)
    \node[draw, circle, fill=black, minimum size=4pt, inner sep=0pt](a1) at (90:1.2) {};
    \node[draw, circle, fill=black, minimum size=4pt, inner sep=0pt](a2) at (162:1.2) {};
    \node[draw, circle, fill=black, minimum size=4pt, inner sep=0pt](a3) at (234:1.2) {};
    \node[draw, circle, fill=black, minimum size=4pt, inner sep=0pt](a4) at (306:1.2) {};
    \node[draw, circle, fill=black, minimum size=4pt, inner sep=0pt](a5) at (18:1.2) {};
    
    % Inner cycle (pentagram)
    \node[draw, circle, fill=black, minimum size=4pt, inner sep=0pt](b1) at (90:0.5) {};
    \node[draw, circle, fill=black, minimum size=4pt, inner sep=0pt](b2) at (162:0.5) {};
    \node[draw, circle, fill=black, minimum size=4pt, inner sep=0pt](b3) at (234:0.5) {};
    \node[draw, circle, fill=black, minimum size=4pt, inner sep=0pt](b4) at (306:0.5) {};
    \node[draw, circle, fill=black, minimum size=4pt, inner sep=0pt](b5) at (18:0.5) {};
    
    % Outer pentagon edges
    \draw (a1) -- (a2);
    \draw (a2) -- (a3);
    \draw (a3) -- (a4);
    \draw (a4) -- (a5);
    \draw (a5) -- (a1);

    % Inner pentagram edges
    \draw (b1) -- (b3);
    \draw (b3) -- (b5);
    \draw (b5) -- (b2);
    \draw (b2) -- (b4);
    \draw (b4) -- (b1);
    
    % Connecting edges between outer and inner nodes
    \draw (a1) -- (b1);
    \draw (a2) -- (b2);
    \draw (a3) -- (b3);
    \draw (a4) -- (b4);
    \draw (a5) -- (b5);
    \end{scope}

\end{tikzpicture}
\caption{Examples of strongly regular graphs, displayed from left to right: the cycle graph $C_5$ ($\srg(5,2,0,1)$), the complete graph 
$K_5$ ($\srg(5,4,3,0)$), the Petersen graph ($\srg(10,3,0,1)$)}
\label{fig:srg_graph_examples}
\end{figure}
Such a graph can also be denoted as $\textsf{srg}(v, k, \lambda,\mu)$ and the four parameters must obey the following relation \citep{biggs1993algebraic}:
\begin{equation}
\label{equality}
{\displaystyle (v-k-1)\mu =k(k-\lambda -1)}.
\end{equation}

\subsection{Regular graphs and WL test}

The WL distinguishability on regular graphs has been studied. This section presents the relationship between the regular/strongly regular graphs and WL tests.

\begin{theorem}
    Regular graphs with the same of number nodes and the same degree are not 1-WL distinguishable.
\end{theorem}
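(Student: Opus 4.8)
The plan is to show by induction on the refinement round that $1$-WL assigns one and the same colour to \emph{every} vertex of both graphs, so that the colour histograms of $G$ and $H$ agree at every round and the test never separates them.

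First I would fix conventions: $1$-WL initialises every vertex with a common colour $\chi^{(0)}(v)=c_0$ and updates by $\chi^{(r+1)}(v)=\mathrm{HASH}\bigl(\chi^{(r)}(v),\ \{\!\{\chi^{(r)}(u):u\sim v\}\!\}\bigr)$; to compare $G$ with $H$ one runs the refinement on the disjoint union $G\sqcup H$ (so colours are comparable) and declares the graphs distinguished only if at some round the multisets $\{\!\{\chi^{(r)}(v):v\in V(G)\}\!\}$ and $\{\!\{\chi^{(r)}(v):v\in V(H)\}\!\}$ differ.

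The induction hypothesis is that there is a colour $c_r$ with $\chi^{(r)}(v)=c_r$ for all $v\in V(G)\cup V(H)$. The base case holds by the uniform initialisation. For the step, take any vertex $v$; since $\deg(v)=k$ in whichever graph it lies, its neighbour multiset is $\{\!\{\,c_r^{\,k}\,\}\!\}$, independent of $v$, so $\chi^{(r+1)}(v)=\mathrm{HASH}(c_r,\{\!\{c_r^{\,k}\}\!\})$ takes a single value $c_{r+1}$ everywhere. Hence at every round the colour multiset of $G$ equals $\{\!\{\,c_r^{\,n}\,\}\!\}$, and likewise for $H$; the two histograms coincide, so $1$-WL cannot distinguish $G$ from $H$.

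There is essentially no obstacle here: the only care needed is to pin down the $1$-WL initialisation and the cross-graph comparison (via refinement on $G\sqcup H$, or equivalently comparison of colour histograms), after which $k$-regularity forces every neighbourhood multiset to be identical and the induction closes in one line. If one instead seeds $1$-WL by vertex degree, the argument is unchanged, since $k$-regularity already makes the seed colouring constant across $G\sqcup H$.
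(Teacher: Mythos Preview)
Your proposal is correct and follows essentially the same approach as the paper: both argue that the uniform initial colouring stays uniform under refinement because every vertex in a $k$-regular graph sees the same neighbour multiset, so the histograms of $G$ and $H$ never diverge. Your version is simply a more carefully formalised induction (with explicit mention of $G\sqcup H$ and the hash update) than the paper's brief sketch, but the underlying idea is identical.
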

\begin{proof}
    Initially, in the 1-WL algorithm, all nodes are assigned the same color. In each iteration, nodes update their color based on the same multiset of neighboring colors, as all nodes receive identical information. Consequently, the coloring remains uniform across all nodes in every iteration, preventing the algorithm from distinguishing between them.
\end{proof}

\begin{theorem} (\cite{bouritsas2020srg3wl})
    \label{thm:strongly_regular_graphs}
    Strongly regular graphs with the same four parameters are not 3-WL distinguishable.
\end{theorem}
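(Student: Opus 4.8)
The plan is to pin down, for every strongly regular graph $H$ with parameters $\srg(v,k,\lambda,\mu)$, what the $3$-WL stable colouring of the set $V(H)^3$ of ordered triples actually is, and show it is completely dictated by $(v,k,\lambda,\mu)$. Concretely, I claim the stable colouring coincides with the partition of $V(H)^3$ into \emph{atomic types}, where the atomic type of $(u_1,u_2,u_3)$ records only which coordinates coincide and which pairs of distinct coordinates are adjacent. There are finitely many atomic types on three labelled positions, and they are graph-independent combinatorial objects; so once the stable colouring is known to be the atomic-type partition \emph{for every} $\srg(v,k,\lambda,\mu)$ graph, the histogram returned by $3$-WL is just the vector recording, per atomic type, how many triples realise it. Each such count is a polynomial in $v,k,\lambda,\mu$ alone (e.g.\ $v$ for an all-equal triple, $vk$ for a repeated-vertex pair forming an edge, $vk\lambda$ and $vk(k-1-\lambda)$ for the triangle and induced-$P_3$ types, $v(v-1-k)$ for a repeated-vertex non-edge, and so on), so two $\srg(v,k,\lambda,\mu)$ graphs yield identical $3$-WL histograms and are indistinguishable.

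The heart of the argument is showing the atomic-type partition is already stable after one refinement round, i.e.\ it equals its own $3$-WL refinement. Fix an atomic type $\tau$, a coordinate $i\in\{1,2,3\}$, and any triple $\bar u$ of type $\tau$; I must show the multiset $\{\!\!\{\,\mathrm{type}(\bar u[w/i]) : w\in V(H)\,\}\!\!\}$ depends only on $\tau$, $i$ and the parameters, not on the particular $\bar u$. When coordinate $i$ is overwritten, the type of $\bar u[w/i]$ is determined by the relation among the two \emph{retained} coordinates (fixed by $\tau$) together with the relations of $w$ to those two retained vertices, so the multiset amounts to counting, for a fixed pair $(x,y)$ with prescribed mutual relation, how many $w$ fall into each profile: ``$w$ coincides with $x$ or $y$'', ``adjacent to both'', ``to the first only'', ``to the second only'', ``to neither''. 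If $x=y$ these counts are $1,k,v-1-k$; if $x\sim y$ they are $\lambda,\ k-1-\lambda,\ k-1-\lambda,\ v-2k+\lambda$ (plus the two retained vertices); if $x\neq y,\ x\not\sim y$ they are $\mu,\ k-\mu,\ k-\mu,\ v-2-2k+\mu$ (plus the two retained vertices). All of these are functions of $v,k,\lambda,\mu$ and are the same for every triple of type $\tau$; hence refinement splits no atomic-type class, and since refinement never merges classes, the atomic-type partition is the $3$-WL-stable colouring of $H$.

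Combining the two parts finishes the proof: any two $\srg(v,k,\lambda,\mu)$ graphs then have stable colourings with the same colours (atomic types) and the same class sizes, so $3$-WL produces the same output on both. The main obstacle is organisational rather than conceptual: one must enumerate all atomic types on three variables, including the degenerate ones with repeated coordinates, and for each type and each coordinate verify the counting identity above, taking care that ``$w$ coincides with a retained vertex'' contributes a genuinely different atomic type that must be tallied separately and that the equal / adjacent / non-adjacent counts are invoked in the right places. A slicker, essentially equivalent route, which I would include as a remark, is to pass to the folklore $2$-dimensional WL on ordered pairs (equivalent in distinguishing power to $3$-WL): its stable colouring is the coherent closure of $H$, and for a strongly regular graph this closure is the rank-$3$ association scheme $\{I,A,\bar A\}$ whose intersection numbers are exactly the $\srg(v,k,\lambda,\mu)$ data, from which the conclusion is immediate.
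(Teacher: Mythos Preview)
Your argument is correct and is the standard proof of this folklore fact. The paper, however, does not supply its own proof of this theorem: it is stated with a citation to \cite{bouritsas2020srg3wl} and used as a black box, so there is no in-paper argument to compare against.

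For what it is worth, your direct approach via atomic types on $3$-tuples matches exactly the $k$-WL definition the paper gives in its appendix (initial colour $=$ isomorphism type of the induced subtuple, refinement by single-coordinate substitution), so the check that the atomic-type partition is already stable is the right thing to verify, and your neighbour-profile counts for the three pair-relations ($x=y$; $x\sim y$; $x\neq y,\ x\not\sim y$) are all correct. The alternative you sketch---passing to $2$-FWL and observing that the coherent closure of a strongly regular graph is its rank-$3$ association scheme with intersection numbers determined by $(v,k,\lambda,\mu)$---is the route most references (including the one the paper cites) take; it is the same computation repackaged, and arguably cleaner to state.
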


\subsection{Generalization of Regular Graphs}

\citet{CAMERON1980transitive} introduces the extension of regular graphs, namely \(k\)-\emph{transitive} graphs or \(k\)-\emph{isoregular} graphs, as graphs where the number of common neighbors for any \(k\)-tuple of a given isomorphism type remains constant. In this context, we write \emph{simple regular graphs} as \(1\)-isoregular graphs, and \emph{strongly regular graphs} are \(2\)-isoregular. It is known that \(k\)-isoregular graphs are indistinguishable by the \((k+1)\)-dimensional Weisfeiler–Leman test~\citep{douglas2011weisfeilerlehmanmethodgraphisomorphism}.

\section{Proofs}\label{apd:proof}

\begin{lemmacopy}
    {\ref{degree_line_graph}}
    \label{app:degree_line_graph}
    Let $u, v\in V(G)$ such that they are adjacent by an edge $e\in E(G).$ The edge $e$'s corresponding node representation $w_e\in V(\text{L}(G))$ follows \(
    d_{\LL(G)}(w_e) = d_G(u)+d_G(v)-2.\)
\end{lemmacopy}
\begin{proof}
    In the root graph $G$, $u$ is adjacent to $d_G(u)$ edges, and $v$ is adjacent to $d_G(v)$ edges. Among all the edges that $u$, $v$ are adjacent to, $e$ is connected to all other edges by $u$ or $v$ except for $e$ itself. Thus, in the line graph $L(G)$, $w_e$ is adjacent to $d_G(v)-1+d_G(u)-1 = d_G(v)+d_G(u)-2$ other nodes. 
\end{proof}

\begin{corollarycopy}
    {\ref{thm:claw-free}} Let $G$ be a simple and undirected graph, $\LL(G)$ does not contain $K_{1,3}$ as an induced subgraph.
\end{corollarycopy}
\begin{proof}
    This corollary is a direct result of Beineke's forbidden subgraphs. $K_{1,3}$ is one of Beineke's forbidden subgraphs, so line graphs do not contain $K_{1,3}$ as an induced subgraph.
\end{proof}
\begin{theoremcopy}{\ref{thm:equivalence}}
    \label{app:equivalence}
    Let \(\mathcal{G}\) be a set of connected non-claw graphs and \(\mathcal{C}\) be a collection of functions, such that $\forall G_1,G_2\in\mathcal{G}$ such that \(G_1\ncong G_2\), \(\exists h\in \mathcal{C}\), \(h(\LL(G_1))\neq h(\LL(G_2))\). Then, \(\mathcal{C}\) can universally approximate any permutation-invariant function $f:\mathcal{G}\rightarrow \mathbb{R}$.
\end{theoremcopy}

\begin{proof}
    Let $f':\LL(\mathcal{G})\rightarrow \mathbb{R}$ be any permutation-invariant functions on $\LL(\mathcal{G})$. By Theorem \ref{thm:unique_approximation}, since \(\exists{h}\in\mathcal{C}\) can distinguish non-isomorphic graphs in $\LL(\mathcal{G}),$ $\mathcal{C}$ can universally approximate any permutation-invariant functions $f':\LL(\mathcal{G})\rightarrow \mathbb{R}.$ Since $G_1,\,G_2$ are connected and not a claw graph, by Theorem \ref{whitney}, there exists a function $g$ such that $g\circ f' = f$ where $g$ injectively maps $f'$ to $f$.
\end{proof}

\begin{corollarycopy}
    {\ref{cor:n_step_linetransform}}
    Let \(\mathcal{G}\) be a set of connected non-claw graphs and \(\mathcal{C}\) be a collection of functions. If $\forall G_1,G_2\in\mathcal{G}$, \(G_1\ncong G_2\), and \(\forall n \in \mathbb{N}\), \(\exists h\in \mathcal{C}\) such that \(h(\LL^{(n)}(G_1))\neq h(\LL^{(n)}(G_2))\) Then, \(\mathcal{C}\) can universally approximate any permutation-invariant function $f:\mathcal{G}\rightarrow \mathbb{R}$.
\end{corollarycopy}
\begin{proof}
    If $n=1$, the proof is the same as Theorem \ref{thm:equivalence}.

    For $n\geq2$, we can safely assume $\LL^{(n-1)}(G)$ is not isomorphic to $K_{1,3}$ by Corollary \ref{thm:claw-free}. By Theorem \ref{thm:unique_approximation}, since \(\exists{h}\in\mathcal{C}\) can distinguish non-isomorphic graphs in $\LL(\mathcal{G}),$ $\mathcal{C}$ can universally approximate any permutation-invariant functions $f_n:\LL^{(n)}(\mathcal{G})\rightarrow \mathbb{R}.$ Since $\LL^{(n-1)}(G)$ is not a claw graph, by Theorem \ref{whitney}, there exists a function $g_{n}$ such that $g_n\circ f_n = f_{n-1}$ where $g_n$ injectively maps $f_n$ to $f_{n-1}$. By inductively applying Theorem \ref{whitney} we can reach the base case.
\end{proof}

\begin{theoremcopy}
    {\ref{thm:cfi}}
    Line graphs do not include CFI graphs constructed with $X_k$ for $k\geq3$.
\end{theoremcopy}

\begin{proof}
    It can be seen that for $k>2,$ by construction there exists no direct edge connecting $a_i$ and $b_i$, where a single $m_S$ connects to $k$ nodes from $A_k\cup B_k$. Thus, there exists an induced subgraph that is isomorphic to $K_{1,3}$. By Corollary \ref{thm:claw-free}, CFI graphs with $X_k$, $k\geq3$ are excluded by line graph transformation.
\end{proof}

\begin{lemma}
\label{lmm:k_small}
    If a graph $G = \srg(v,k,\lambda,\mu)$ is connected and strongly regular, it holds that $k\leq2$ if and only if $G$ is one of $K_{2}$, $C_{3}$, $C_{4}$, $C_{5}$ and the graph with one node.
\end{lemma}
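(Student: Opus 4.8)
The plan is to handle the two implications separately. The reverse implication is immediate by inspection: the single-node graph is $0$-regular, $K_2$ is $1$-regular, and $C_3, C_4, C_5$ are each $2$-regular, so all five have degree $k \le 2$; one also records that each is genuinely strongly regular ($C_3 = K_3 = \srg(3,2,1,0)$, $C_4 = \srg(4,2,0,2)$, $C_5 = \srg(5,2,0,1)$, and the single-node graph vacuously), which is what makes the equivalence nontrivial given the standing hypothesis that $G$ is strongly regular.

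For the forward implication, assume $G$ is connected, strongly regular, with degree $k \le 2$, and split on $k \in \{0,1,2\}$. If $k = 0$ the graph is edgeless, so connectedness forces it to be the single-node graph. If $k = 1$ then $G$ is a connected $1$-regular graph, hence a single edge, i.e.\ $G \cong K_2$. The only case needing an argument is $k = 2$: a connected $2$-regular graph is a cycle $C_n$ with $n \ge 3$, so it remains to show that the only strongly regular cycles are $C_3, C_4, C_5$.

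To exclude $C_n$ for $n \ge 6$, I will use that in a strongly regular graph every pair of non-adjacent vertices has a fixed number $\mu$ of common neighbours. In a cycle on $n \ge 6$ vertices, a pair of vertices at distance $2$ is non-adjacent and has exactly one common neighbour, whereas a pair at distance $3$ is non-adjacent and has none; both such pairs exist once $n \ge 6$, so $\mu$ would have to equal both $1$ and $0$, a contradiction. Hence $n \le 5$, giving $G \in \{C_3, C_4, C_5\}$ and completing the proof. The whole argument is elementary; the only point that needs care is the $k = 2$ analysis, in particular the degenerate case $C_3 = K_3$, where there are no non-adjacent pairs so $\mu$ is vacuous and $C_3$ still qualifies as strongly regular.
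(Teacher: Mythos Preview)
Your proof is correct and follows essentially the same approach as the paper: both enumerate the cases $k\in\{0,1,2\}$, identify connected $2$-regular graphs as cycles, and rule out $C_n$ for $n\ge 6$ by observing that a distance-$2$ pair has one common neighbour while a distance-$3$ pair has none, violating the constancy of $\mu$. Your write-up is slightly more thorough in explicitly checking the reverse implication and flagging the vacuous $\mu$ in $C_3$, but the core argument is the same.
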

\begin{proof}
We show the proof by enumerating all the cases when $k\leq 2.$ When $k=0,$ we have only the graph with only one node. When $k=1,$ for the graph to be connected it can only be the case when two nodes are connected by a single edge. When $k=2$, regular graphs are cycles. Cycles with lengths 6 or more are not strongly regular because nodes that are a distance of 2 from each other share exactly one common neighbor, whereas nodes that are 3 or more nodes apart do not share any common neighbors. This inconsistency in the number of common neighbors violates the definition of a strongly regular graph, where the number of common neighbors between nodes must be constant for both adjacent and non-adjacent pairs.
\end{proof}

\begin{theoremcopy}
    {\ref{thm:tri-line-graph-regularity}}Let $G$ be a connected, strongly regular graph which is not the complete graph $K_n$ for $n \geq 2$, and $G$ contains $C_3$ as a subgraph. Then $\LL(G)$ is not strongly regular.
\end{theoremcopy}
\begin{proof} Suppose $G$ is a strongly regular graph denoted as $\text{srg}(v ,k,\lambda,\mu)$ (defined in Appendix \ref{srg_def}). Given that $G$ is not a complete graph, it follows that $\mu \neq 0$, where $\mu$ denotes the number of mutual neighbors between every pair of non-adjacent nodes. 
Also, since $G$ is triangle-containing and not $K_3,$ by Lemma \ref{lmm:k_small}, we have $k\geq3$.
Suppose, for a contradiction, that $\LL(G)$ is strongly regular, which can be denoted as $\text{srg}(v^*,k^*,\lambda^*,\mu^*)$.

By the properties of line graphs, the nodes of $\LL(G)$ correspond to the edges of $G$, hence $v^* = |E(G)|$. Based on the property of regular graphs, we have $ |E(G)|= \frac{vk}{2}=v^* $. In $\LL(G)$, each edge is adjacent to every other edge that shares a node in $G$, thus (also by Equation \ref{degree_line_graph}) we have $k^* = 2(k - 1)$.

Given that $G$ contains triangles, let two edges $e_0$ and $e_1$ in $E(G)$ be part of a triangle shown in Figure \ref{fig:3_mus}. By the definition of line graphs, there exists a node $u \in V(G)$ that is adjacent to $e_0$ and $e_1$. Thus, the third side $e_2$ of the triangle as well as all other $k-2$ edges connected with $u$ are common neighbors in $\LL(G)$. Therefore, for $\LL(G)$ to be strongly regular, $\lambda^*$ must be $k-1$. Given the $k\geq 3$, there exists an edge $e_3$ that does not share a node with $e_2$ on $G$, which corresponds to non-neighboring nodes in $\LL(G)$. This yields $\mu^*$ values ranging from 2 to 4 based on all possible configurations shown in Figure \ref{fig:3_mus}.
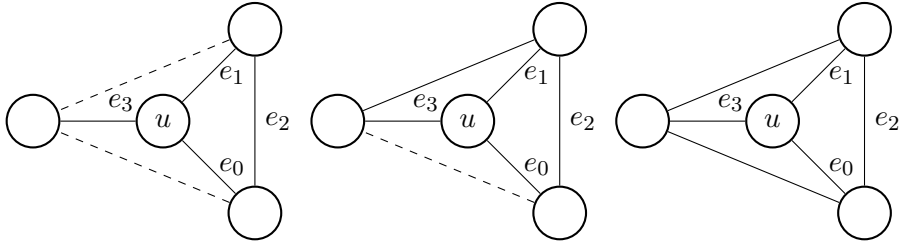
\begin{figure}[ht]
\centering

\begin{tikzpicture}[
                    node distance=1cm,
                    main node/.style={circle, draw=black, thick, minimum size=7mm},
                    indirect edge/.style={dashed}]

  % Nodes
  \node[main node] (0) {$u$};
  \node[main node] (2) [above right=of 0] {};
  \node[main node] (3) [below right=of 0] {};
  \node[main node] (4) [left =of 0] {};
  
  % Edges
  \draw (0) -- (2) node[midway, right] {$e_1$};
  \draw (0) -- (3) node[midway, right] {$e_0$};
  \draw (0) -- (4)node[midway, above right] {$e_3$};
  \draw (2) -- (3)node[midway, right] {$e_2$};
  \draw[indirect edge] (2) -- (4);
  \draw[indirect edge] (3) -- (4);
  
\end{tikzpicture}
\begin{tikzpicture}[>=Stealth, 
                    node distance=1cm,
                    main node/.style={circle, draw=black, thick, minimum size=7mm},
                    indirect edge/.style={dashed}]

  % Nodes
  \node[main node] (0) {$u$};
  \node[main node] (2) [above right=of 0] {};
  \node[main node] (3) [below right=of 0] {};
  \node[main node] (4) [left =of 0] {};
  
  % Edges
  \draw (0) -- (2) node[midway, right] {$e_1$};
  \draw (0) -- (3) node[midway, right] {$e_0$};
  \draw (0) -- (4)node[midway, above right] {$e_3$};
  \draw (2) -- (3)node[midway, right] {$e_2$};
  \draw (2) -- (4);
  \draw[indirect edge] (3) -- (4);
  
\end{tikzpicture}
\begin{tikzpicture}[>=Stealth, 
                    node distance=1cm,
                    main node/.style={circle, draw=black, thick, minimum size=7mm}]

  % Nodes
  \node[main node] (0) {$u$};
  \node[main node] (2) [above right=of 0] {};
  \node[main node] (3) [below right=of 0] {};
  \node[main node] (4) [left =of 0] {};
  
  % Edges
  \draw (0) -- (2) node[midway, right] {$e_1$};
  \draw (0) -- (3) node[midway, right] {$e_0$};
  \draw (0) -- (4)node[midway, above right] {$e_3$};
  \draw (2) -- (3)node[midway, right] {$e_2$};
  \draw (2) -- (4);
  \draw (3) -- (4);
  
\end{tikzpicture}

\caption{Three different cases in the triangle region in $G$ where the dashed line represents no edge exists. In the first case, there is no edge connecting the other end node of $e_3$ to either end node of $e_2$. Only $e_0$ and $e_1$ are the common neighbors in L$(G)$ for the pair $e_2$ and $e_3.$. The second and third cases represent if one or both edges exist, where the edges would also be neighboring nodes in $\LL(G).$}
\label{fig:3_mus}
\end{figure}

We use the following property in Equation \ref{equality} 
where $v^* = \frac{vk}{2}$, $k^* = 2(k - 1)$, $\lambda^* = k - 1$, and $\mu^* = 2, 3, 4$. Evaluating this equation under $\mu^* = 2$ or $\mu^* = 3$ leads to the only positive integer solutions $v = 2, k = 1$ and $v = 3, k = 2$. Both solutions are invalid as we have $k\geq3$. For $\mu^* = 4$, the solutions are $v = k + 1$, which contradicts the assumption that $G$ is not complete. This shows that $\LL(G)$ cannot be strongly regular.

\end{proof}

\begin{theoremcopy}
    {\ref{main_thm}}Let $G$ be a connected graph that is not the cycle graphs $C_3$, $C_4$, or $C_5$. Then, applying at most two line graph transformations to $G$ yields a graph that is not strongly regular.
\end{theoremcopy}
\begin{proof}
The proof is structured based on the characteristics of the graph $G$ and proceeds in five cases:

\textbf{Case 1:} If $G$ is not strongly regular, then by definition, no line graph transformation is required.

\textbf{Case 2:} If $G$ is the graph with one node or $K_2$, we can easily see that $\LL(G)$ and $\LL(\LL(G))$ respectively are not strongly regular.

\textbf{Case 3:} If $G$ is a strongly regular graph that contains triangles and is not complete, we can apply Theorem \ref{thm:tri-line-graph-regularity} to show that $\LL(G)$ is not strongly regular.

\textbf{Case 4:} If a strongly regular graph $G$ is triangle-free with $k\geq3$, each neighbourhood of nodes in $G$ is star-like. Thus, for any two edges $e$ and $e'$ in $G$ that are connected through a node $u$, they share exactly $k-2$ common neighbors (the other edges connected to $u$). Assuming $\LL(G)$ is strongly regular, with parameters $v^*, k^*, \lambda^*, \mu^*$, it follows that $\lambda^* = k-2$, which is nonzero. This implies $\LL(G)$ is triangle-containing. Given that only star graphs have line graphs that are complete, $\LL(G)$ cannot be a complete graph. By Theorem \ref{thm:tri-line-graph-regularity}, we need only one more line graph transformation to have $\LL(\LL(G))$ not strongly regular.

\textbf{Case 5:} By Lemma \ref{lmm:k_small}, we only have complete graphs with $v\geq 4$ left. If $G$ is a complete graph with $v \geq 4$, it follows that the line graph of $G$, $\LL(G)$, can be represented as a strongly regular graph with parameters $\srg\left(\binom {v}{2}, 2(v-2), v-2, 4\right)$ \citep{Harary1969}. In this scenario, $\LL(G)$ contains a triangle implied by the nonzero parameter $v-2$. Applying Theorem \ref{thm:tri-line-graph-regularity}, we can show that the line graph of $\LL(G)$, $\LL(\LL(G))$, is not strongly regular.
\end{proof}

\begin{corollarycopy}
    {\ref{cor:disconnected_equiv}}
Let \(\mathcal{G}\) be a set of graphs such that $\forall G\in\mathcal{G}$, for a fixed $n$, $G$ does not have a component isomorphic to $K_{1,3}$ or $P_k$, $k\leq n$ and Let \(\mathcal{C}\) be a collection of functions. If $\forall G_1,G_2\in\mathcal{G}$ such that \(G_1\ncong G_2\), \(\exists h\in \mathcal{C}\) such that \(h(\LL^{(n)}(G_1))\neq h(\LL^{(n)}(G_2))\). Then, \(\mathcal{C}\) can universally approximate any permutation-invariant function $f:\mathcal{G}\rightarrow \mathbb{R}$.
\end{corollarycopy}
\begin{proof}
   We need to demonstrate that Corollary \ref{cor:n_step_linetransform} applies to disconnected graphs as well. It suffices to show that Theorem \ref{whitney} holds for $G$. For disconnected graphs, the line graph transformation is applied component-wise. Since $G$ contains no component isomorphic to $K_{1,3}$, each component of $G$ is uniquely mapped to a line graph component corresponding to the root component by Theorem \ref{whitney}. Also, under the assumption that no path with a length shorter than $n$ is included in $G$ as a connected component, every component would not vanish after repeated line graph transformation (conversely, we see paths of length $k$ or shorter would be turned to a graph with no nodes after $k$ line graph transformations). Consequently, the line graph of the entire graph $G$ is also unique to its root graph. The rest of the proof proceeds as in Corollary \ref{cor:n_step_linetransform}. 
\end{proof}

\begin{corollarycopy}
    {\ref{cor:disconnected_srg}}
     Let $G$ be a strongly regular graph that is not the cycle graphs $C_3$, $C_4$, $C_5$, or a disjoint group of $C_3$. Then, applying at most two line graph transformations to $G$ yields a graph that is not strongly regular.
\end{corollarycopy}
\begin{proof}
    For a graph $G$ to be disconnected and strongly regular, it could only be the case where the graph is a set of disjoint $K_n$~\citep{biggs1993algebraic}. When $n\leq2$, we can see that at most two line graphs would reduce $G$ to an empty graph. When $n\geq 3$, each component would be reduced to $\srg \left(\binom {v}{2}, 2(v-2), v-2, 4\right)$. The overall graphs would not be strongly regular because one node in a component does not share any common neighbors with nodes in other components. When $G$ is a set of $C_3$ (i.e. $K_3$), we have $\LL^{(n)}(G)\cong G.$
\end{proof}

\section{Experimental details}\label{apd:second}
\subsection{Code availability}
We open-source our code to replicate the experiments on \href{https://github.com/lukeyf/graphs-and-lines}{GitHub}. The details about the experimental setup and training parameters can be found in the GitHub repository.
\subsection{Dataset}
Our experiments were conducted on the BREC dataset \citep{wang2024brec} with the sections \textbf{Regular Graphs}, and \textbf{CFI Graphs}. A summary of each category is provided below.

We selected 120 pairs of regular graphs, which can be further divided into simple regular graphs, strongly regular graphs, and 4-vertex condition graphs:
\begin{itemize}
    \item \textbf{Simple regular graphs}: We selected 50 pairs of simple regular graphs, each with 6 to 10 nodes, by randomly choosing pairs with identical parameters.
    \item \textbf{Strongly regular graphs}: This subset includes 50 pairs of strongly regular graphs with node counts ranging from 16 to 35. The graphs were sourced from databases such as \href{http://www.maths.gla.ac.uk/}{SR Graphs} and \href{http://users.cecs.anu.edu.au/~bdm/data/graphs.html}{BDM Graphs}.
    \item \textbf{4-vertex condition graphs}: A set of 20 pairs of 4-vertex condition graphs was selected from the \href{http://math.ihringer.org/srgs.php}{4-vertex Condition Graph Database} with parameters $\srg(63, 30, 13, 15).$
\end{itemize}
Note that 4-vertex condition graphs are a specific subtype of strongly regular graphs, and as such, we classify them within the same category as other strongly regular graphs in our results.

We selected the 100 pairs of graphs in the BREC dataset. The backbone graphs ranged from 3 to 7 nodes. The dataset contains:
\begin{itemize}
    \item 60 pairs of 1-WL-indistinguishable CFI graphs,
    \item 20 pairs of 3-WL-indistinguishable CFI graphs, and
    \item 20 pairs of 4-WL-indistinguishable CFI graphs.
\end{itemize}

\subsection{Provably Powerful Neural Networks}\label{app:ppgn}
To address the limitations of 1-WL expressiveness, recent research has focused on constructing GNNs that match or surpass the expressive power of higher-order WL tests. In this section, we discuss the provably powerful neural network (PPGN) proposed by \citet{maron2020provablypowerfulgraphnetworks} with 3-WL expressiveness that can distinguish between non-isomorphic graphs that are indistinguishable by 1-WL.

The construction of a 3-WL expressive GNN involves three main components:
\begin{enumerate}
    \item \textbf{Input representation}: The graph $G = (V, E, d)$, where $V$ is the set of nodes, $E$ is the set of edges, and $d$ represents node features (or colors), is represented as a tensor $B \in \mathbb{R}^{n^2 \times (e+1)}$, where $n$ is the number of nodes, $e$ is the number of features, and the last channel of $B$ encodes the graph adjacency matrix.
    
    \item \textbf{Network layers}: The key operations of the GNN are organized into \emph{blocks}, each consisting of:
    \begin{itemize}
        \item A \emph{Multi-layer Perceptron (MLP)} applied independently to each feature of the input tensor. This is denoted as $m_i$ where $i \in \{1, 2, 3\}$. Each MLP transforms the input features.
        \item A \emph{matrix multiplication} between the transformed feature tensors. Let $X \in \mathbb{R}^{n \times n \times a}$ be the input tensor to the block, where $a$ is the feature dimension. The matrix multiplication is performed between the transformed tensors: 
        \[
        W_{:, :, j} = m_1(X)_{:, :, j} \cdot m_2(X)_{:, :, j} \quad \forall j \in \{1, \dots, b\}.
        \]
        The output tensor of the block consists of the concatenated MLP transformation $m_3(X)$ and the result of matrix multiplication $W$.
    \end{itemize}

    \item \textbf{Invariant and equivariant layers}: The model employs invariant and equivariant layers to ensure that its operations respect the permutation symmetry of the graph. The matrix multiplication operation is equivariant to permutations, ensuring that the model’s output is invariant to node reordering. This structure ensures that the GNN can distinguish between non-isomorphic graphs that 1-WL cannot.
\end{enumerate}

The GNN described above is provably as expressive as the 3-WL test. Formally, the following result holds:
\begin{itemize}
    \item For any two graphs $G$ and $G'$ that can be distinguished by the 3-WL graph isomorphism test, there exists a 3-WL expressive PPGN $F$ such that $F(G) \neq F(G')$.
    \item Conversely, if $G$ and $G'$ are isomorphic, then $F(G) = F(G')$ for any 3-WL expressive PPGN $F$.
\end{itemize}

\subsection{Complexity analysis}

In this section, we present the space and time complexity analysis associated with applying the line graph transformation in the context of \( k \)-WL tests. The \( k \)-WL test has a space complexity of \( O(n^k) \) and a time complexity of \( O(n^{k+1}) \) \citep{feng2023wlcomplexity}, where $n$ is the number of nodes in a graph $G$, i.e., $n = |V(G)|$. After performing the line graph transformation, the number of nodes in the transformed graph corresponds to the number of edges in the original graph. Consequently, for structures like paths and cycles, the space complexity remains \( O(n^k) \) and the time complexity \( O(n^{k+1}) \).

For a \( d \)-regular graph, the number of nodes in the line graph is \( \frac{dn}{2} \). Therefore, the space complexity becomes \( O(d^k n^k) \), and the time complexity is \( O(d^{k+1} n^{k+1}) \). In the worst-case scenario, when the graph is dense—such as in a complete graph with \( \frac{n(n-1)}{2} \) edges—the space complexity increases to \( O(n^{2k}) \) and the time complexity to \( O(n^{2k+2}) \). 

The empirical comparison of time consumption on regular graphs is presented in Figure \ref{fig:speed}. A consistent increase in time consumption by orders of magnitude is observed. We present the subset of 4-vertex condition strongly regular graphs separately, as their significantly larger size distinguishes them from other strongly regular graphs in the BREC dataset. In the PPGN comparison, the line graph generally requires more time compared to the root graph. However, the results vary depending on the specific hyperparameter and epoch settings of the PPGN.

\begin{figure}
    \centering
    \includegraphics[width=1\linewidth]{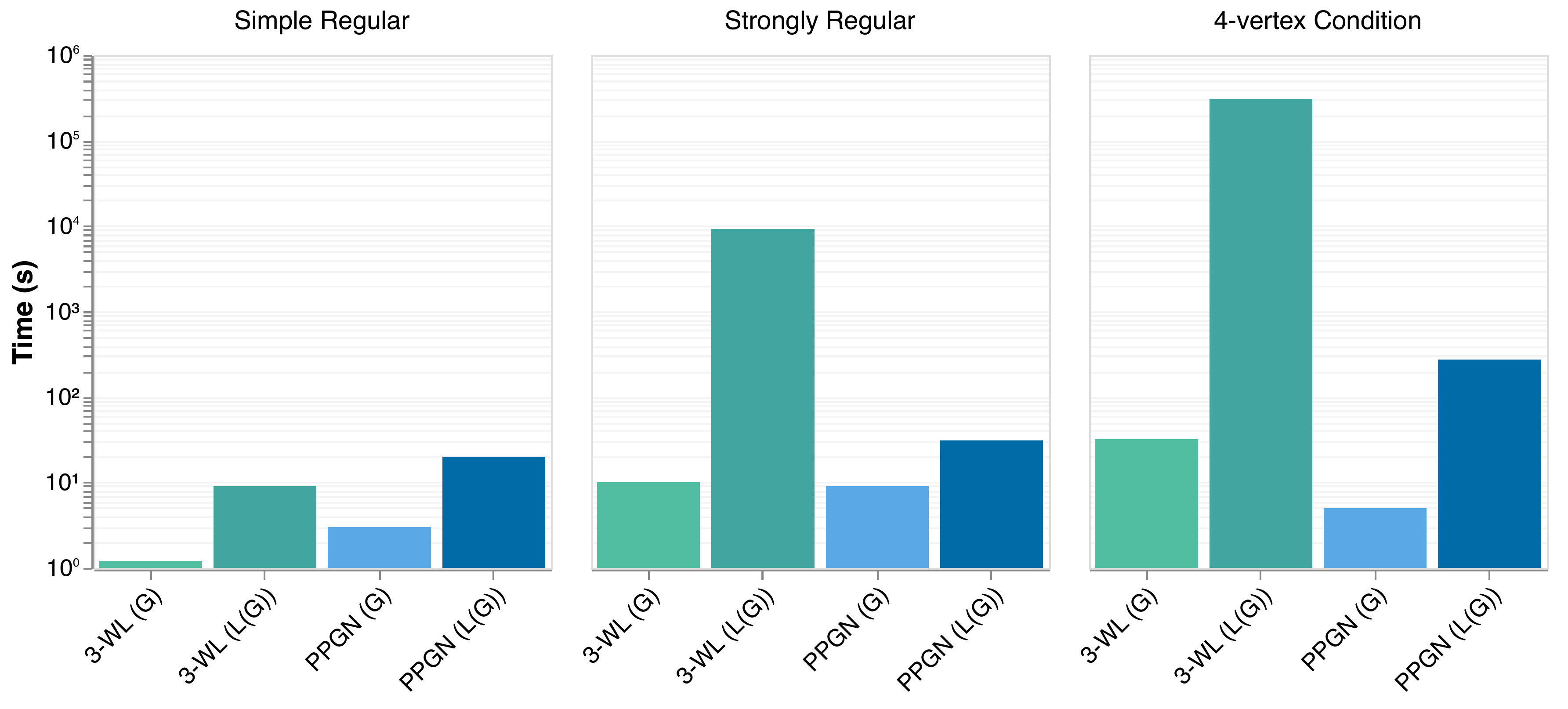}
    \caption{Time consumption on simple regular, strongly regular, and 4-vertex condition strongly regular subsets.}
    \label{fig:speed}
\end{figure}
 
\subsection{Experimental setup and parameters}

For training the PPGN, we employed the Adam optimizer with a learning rate of 0.0001 and a weight decay of 0.0001. The loss function used was \texttt{CosineEmbeddingLoss}. The model architecture consisted of 5 layers with an inner embedding dimension of 32. For the root graph, a batch size of 32 was used, whereas for the line graph, a smaller batch size of 4 was applied due to memory limitations.
\subsection{Compute resources}

For 3-WL and 4-WL tests, we used 4-core CPU clusters to run the analysis. For the PPGN experiment, we used an additional 40 GB A100 GPU to accelerate the training.

\end{document}